\documentclass[letterpaper]{article} 
\usepackage{aaai23}  
\usepackage{times}  
\usepackage{helvet}  
\usepackage{courier}  
\usepackage[hyphens]{url}  
\usepackage{graphicx} 
\urlstyle{rm} 
\usepackage{natbib}  
\usepackage{caption} 
\frenchspacing  
\setlength{\pdfpagewidth}{8.5in} 
\setlength{\pdfpageheight}{11in} 
%
\usepackage{algorithm}
\usepackage{algorithmic}

\usepackage{url}            
\usepackage{booktabs}       
\usepackage{amsfonts}       
\usepackage{nicefrac}       
\usepackage{microtype}      
\usepackage{xcolor}         
\usepackage{amsmath}
\usepackage{amssymb}
\usepackage{mathtools}
\usepackage{amsthm}
\usepackage[utf8]{inputenc} 
\usepackage[T1]{fontenc}    

\usepackage{color}
\usepackage{caption}
\usepackage{subfigure}
\usepackage{graphicx}
\usepackage{multirow}
\usepackage{url}
\usepackage{amssymb}

\usepackage{amsmath}               
  {
      \theoremstyle{plain}
      \newtheorem{assumption}{Assumption}
      \newtheorem{theorem}{Theorem}
      \newtheorem{definition}{Definition}
      \newtheorem{lemma}{Lemma}
  }

%
\usepackage{newfloat}
\usepackage{listings}
\DeclareCaptionStyle{ruled}{labelfont=normalfont,labelsep=colon,strut=off} 
\lstset{%
	basicstyle={\footnotesize\ttfamily},
	numbers=left,numberstyle=\footnotesize,xleftmargin=2em,
	aboveskip=0pt,belowskip=0pt,%
	showstringspaces=false,tabsize=2,breaklines=true}
\floatstyle{ruled}
\newfloat{listing}{tb}{lst}{}
\floatname{listing}{Listing}
%
\pdfinfo{
/TemplateVersion (2023.1)
}

\setcounter{secnumdepth}{0} 

%


\title{Estimating Treatment Effects from Irregular Time Series Observations \\ with Hidden Confounders}
\author{
    Defu Cao\equalcontrib\textsuperscript{\rm 1}, James Enouen \equalcontrib \textsuperscript{\rm 1}, Yujing Wang \textsuperscript{\rm 2}, Xiangchen Song \textsuperscript{\rm 3}, \\ Chuizheng Meng \textsuperscript{\rm 1}, Hao Niu \textsuperscript{\rm 4}, Yan Liu \textsuperscript{\rm 1}\\
}
\affiliations{
    \textsuperscript{\rm 1}University of Southern California\\
    \textsuperscript{\rm 2}Peking University\\ 




    \textsuperscript{\rm 3}Carnegie Mellon University\\ \textsuperscript{\rm 4}KDDI Research, Inc.  \\

     \{defucao, enouen, chuizhem, yanliu.cs\}@usc.edu, yujwang@pku.edu.cn\\
 xiangchensong@cmu.edu, ha-niu@kddi.com
%
}

\usepackage{bibentry}

\begin{document}

\maketitle

\begin{abstract}

Causal analysis for time series data, in particular estimating individualized treatment effect (ITE), is a key task in many real-world applications, such as finance, retail, healthcare,  etc.  
{ Real-world time series can include large-scale, irregular, and intermittent time series observations, raising significant challenges to existing work attempting to estimate treatment effects.}
Specifically, the existence of hidden confounders can lead to  biased treatment estimates and complicate the causal inference process. In particular, anomaly hidden confounders which exceed the typical range can lead to high variance estimates. Moreover,  in continuous time settings with irregular samples, it is challenging to directly handle the dynamics of causality. In this paper, we leverage recent advances in Lipschitz regularization and neural controlled differential equations (CDE)  to develop an effective and scalable solution, namely LipCDE, to address the above challenges. LipCDE can directly model the dynamic causal relationships between historical data and outcomes with irregular samples by considering the boundary of hidden confounders given by Lipschitz-constrained neural networks. Furthermore, we conduct extensive experiments on both synthetic and real-world datasets to demonstrate the effectiveness and scalability of LipCDE. 
\end{abstract}

\setcounter{secnumdepth}{2}
\section{Introduction}
\label{submission}

Estimating individualized treatment effects (ITE) for time series data, which makes predictions about causal effects of actions~\cite{zhangcounterfactual}, is one key task in many domains, including marketing~\cite{brodersen2015inferring, abadie2010synthetic}, education~\cite{mandel2014offline}, healthcare~\cite{kuzmanovic2021deconfounding}, etc. 
However, the existence of confounders
can introduce bias into the estimation \cite{simpson1951interpretation, pearl2000models}. For example, in finance applications, multi-factor investing strategies can give investors a deeper understanding of the risk drivers underlying a portfolio. The unobserved factors (i.e., hidden confounders), which typically happen at irregular time stamps and are not reflected in finance system records or are difficult to observe, could bring bias by influencing both interventions and stock returns. The reason is that even a small number of existing factors (such as Small Minus Big and High Minus Low) could significantly explain the cross-section of stock returns ~\cite{d2021evolving}.  
If we can simulate such hidden confounders within a reasonable range,
we are able to obtain  treatment estimates with reduced bias and  variance by making appropriate impact assumptions on the relationship between treatments and outcomes ~\cite{wang2019blessings}.

Estimating ITE is an extremely challenging task in continuous time settings with hidden confounders.  
First, estimating treatment
effects in large-scale irregular and sparse time series still has
considerable room for improvement as previous works fail
to consider the continuous time setting, where it is difficult to handle the dynamic behavior and complex interactions of covariates and treatments~\cite{gao2021causal}. 
Second, hidden confounders' values generated by randomness and noise can introduce high variance and undesirable explanations. 
For example, in healthcare applications, according to domain knowledge of drug resistance, the response to single-agent immune-checkpoint inhibitors (ICI) in uremic patients ranged from $15\%$ to $31\%$~\cite{zibelman2016emerging}. Consequently, when left unconsidered, drug resistance will introduce biased estimates of  treatment effects. Furthermore,
any substitute  confounders generated by data-driven methods with an impact on outcomes over $31\%$ can lead to high variance. 

Recently, there have been several attempts to address these challenges. To model hidden confounders over time, ~\cite{bahadori2021debiasing} introduce a new causal prior graph for the confounding information and concept completeness to improve the interpretability of prediction models; ~\cite{pmlr-v139-mastakouri21a} study the identification of direct and indirect causes for causal feature selection in time series solely based on observational data. Deconfounding-based models ~\cite{hatt2021sequential, bica2020time} use latent variables given by their factor model as substitutes for the hidden confounders to render the assigned treatments conditionally independent. However, existing works either cannot handle irregular time series~\cite{bahadori2021debiasing, pmlr-v139-mastakouri21a}, or have strong assumptions  ~\cite{hatt2021sequential, bica2020time}. 
Furthermore, the range of hidden confounders generated by previous data-driven works is possibly  unjustifiable, which will distort (obscure or augment) the true causal relationship between treatments and outcomes.

In this work, we consider the task of estimating treatment effects under continuous time settings with multi-cause hidden confounders (which affect multiple treatments and the outcome).
To tackle the above two challenges, we propose a novel Lipschitz regularized neural controlled differential equation (LipCDE) model for estimation by obtaining the constrained time-varying hidden confounders. 
Specifically, LipCDE first infers the interrelationship of hidden confounders on treatment by estimating the boundary of hidden confounders: we decompose the historical covariates into low-frequency components and high-frequency components in the spectral domain.  Then we use Lipschitz regularization~\cite{araujo2021lipschitz} on the decomposition to get the latent representation.  Afterward, we model the historical trajectories with neural CDE using sparse numerical solvers, which is one of the most suitable methods for large-scale problems under the continuous time setting~\cite{frohlich2019scalable}. 
In this way, we can explicitly model the observed irregular sequential data as a process evolving continuously in time with a dynamic causal relationship to equip the LipCDE with interpretability. In the outcome model, we re-weight the population of all participating patients and balance the representation via applying the inverse probability of treatment weighting (IPTW) strategy ~\cite{lim2018forecasting}.

 In this paper, we conduct extensive experiments on  both  simulated and real-world datasets. 
Experimental results show that LipCDE outperforms other state-of-the-art estimating treatment effect approaches. From a qualitative perspective,  
experiments show that LipCDE is in agreement with the true underlying hidden confounders in simulated environments, which can effectively eliminate bias in causal models ~\cite{pearl2000models}.
In addition, the average  RMSE of TSD~\cite{bica2020time} and SeqConf~\cite{hatt2021sequential}  on MIMIC-III's blood pressure outcome and COVID-19 datasets decreases by 28.7\%  and 32.3\%,  respectively.
To the best of our knowledge, this is the first complete estimating treatment effects model that considers both the boundary of hidden confounders and the continuous time setting.

We summarize the main \textbf{contributions} as follows: 
\begin{itemize}
    \item LipCDE utilizes a convolutional operation with Lipschitz regularization on the spectral domain and neural controlled differential equation from observed data to obtain hidden confounders, which are bounded to reduce the high variance of treatment effect estimation. 
    \item LipCDE can fully use the information from observed data and dynamic time intervals, allowing the continuous inclusion of input interventions and supporting irregularly sampled time series.
    \item Sufficient experiments demonstrate the effectiveness of LipCDE in estimating treatment effects on both synthetic and real-world datasets. Particularly, experiments on  MIMIC-III and COVID-19 demonstrate the potential of LipCDE for healthcare applications in personalized medical recommendation. 
\end{itemize}

\section{Related Work}

\textbf{Treatment effects learning in the static setting.}
In recent years, there has been a significant increase in interest in the study of causal inference accomplished through representational learning~\cite{kallus2018causal,curth2021inductive}. ~\cite{johansson2016learning}  proposes to take advantage of the multiple processing methods assigned in a static environment. \cite{shalit2017estimating} show that balancing the representational distributions of the treatment and control groups can help upper limits of error for counterfactual outcome estimates. However, these approaches rely on the strong ignorability assumption, which ignores the influence of implicitly hidden confounders.  Many works focus on  relaxing  such assumptions with the consideration of hidden confounders including domain adversarial training~\cite{berrevoets2020organite,curth2021inductive}. \cite{guo2020counterfactual} and \cite{guo2020learning} propose to unravel the patterns of hidden confounders from the network structure and observed features by learning the representations of hidden confounders and using the representations for potential outcome prediction. \cite{wang2019blessings} propose to estimate confounding factors in a static setting using a latent factor model and then infer potential outcomes using bias adjustment. Nevertheless, such works fail to take advantage of the dynamic evolution of the observed variables and the inter-individual relationships which are present in the time-dynamic setting. 

\textbf{Treatment effects learning in the dynamic setting without hidden confounders.} There are many related previous works estimating treatment effects in dynamic settings
including g-computation formula, g-estimation of structural nested mean models \cite{hernan2010causal}, IPTW in marginal structural models (MSMs) \cite{robins2009estimation},  and recurrent marginal structural networks (RMSNs) \cite{lim2018forecasting}, CRN ~\cite{Bica2020Estimating} etc.
In addition, Gaussian processes~\cite{schulam2017reliable} and bayesian nonparametrics~\cite{roy2017bayesian} have been tailored to estimate treatment response in a continuous time setting in order to incorporate non-deterministic quantification. Besides, ~\cite{soleimani2017treatment} relies on regularization to decompose the observed data into shared and signal-specific components in treatment response curves from multivariate longitudinal data. However, those models still need constraint methods to guarantee the posterior consistency of the sub-component modules and cannot directly model the dynamic causal relationship between different time intervals. While ~\cite{seedat2022continuous, de2022predicting} directly model the dynamic causal relationship, they make a strong assumption with no hidden confounders, which 
does not have the flexibility to be applied to all real-world scenarios.

\textbf{Treatment effects learning in the dynamic setting with hidden confounders.} Rather than making strong ignorability assumptions,  \cite{pearl2012measurement} and \cite{kuroki2014measurement} theoretically prove that observed proxy variables can be used to capture hidden confounders and estimate treatment effects. \cite{veitch2020adapting} use network information as a proxy variable to mitigate confounding bias without utilizing the characteristics of the instances.
TSD \cite{bica2020time} introduces recurrent neural networks in the factor model to estimate the dynamics of confounders. In a similar vein, \cite{hatt2021sequential} propose a sequential deconfounder to infer hidden confounders  by using Gaussian process latent variable model and DTA~\cite{kuzmanovic2021deconfounding} estimates treatment effects under dynamic setting using observed data as noisy proxies. Besides, DSW \cite{RefWorks:doc:6170aa6bc9e77c00014e305e}  infers the hidden confounders by using a deep recursive weighted neural network that combines current treatment assignment and historical information. DNDC \cite{ma2021deconfounding} aims to learn how hidden confounders behave over time by using current network observation data and historical information. However, previous works have not bounded confounders leading to high variance estimates when the data-driven approach produces anomaly confounders which have exceeded the impact constraint over treatments and outcomes. 


\section{Problem Setup}

\subsection{Estimating Treatment Effects Task}  
Here we define the problem of estimating treatment effects from irregular time series observations formally: observational data for each patient $i$ at irregular time steps $t_{0}^i<\cdots<t_{m_i}^i$ for some $m_i\in\mathbb{N}$.
We have observed covariates $X^i = [X_{t_0}^i,X_{t_1}^i,  \ldots, X_{t_{m_i}}^i] \in \mathcal{X}_t$ and corresponding treatments $A^i = [A_{t_0}^i,A_{t_1}^i,  \ldots, A_{t_{m_i}}^i] \in \mathcal{A}_t$, and $a_{t_k}$
is the set of all $j$ possible assigned  treatments at timestep $t_k$.
Additionally, we have hidden confounder variables $Z^i = [Z_{t_0}^i, Z_{t_1}^i,  \ldots, Z_{t_{m_i}}^i] \in \mathcal{Z}_t$.  We omit the patient id $i$ on timestamps unless they are explicitly needed. 
Combining all hidden confounders, observed covariates, and observed treatments, we define the history before time $t_k$ as $H_{ t_k}^i =\{X_{< t_k}^i, A_{< t_k}^i, Z_{< t_k}^i\}$
as the collection of all historical information.

We focus on one-dimensional outcomes $Y^i = [y_{t_0}^i,y_{t_1}^i,  \ldots, y_{t_{m}}^i] \in \mathcal{Y}_t$ and  we will be interested in the final expected outcome $\mathbb{E}[Y^i_{a_t,t_m}|H^i_t, X_{t}^i, A_{t}^i, Z_{t}^i]$, given a specified treatment plan $a$.
In this way, we can define the individual treatment effect (ITE) with historical data as $\tau_t^i = \mathbb{E}[Y^i_{b_t,t_m}|H^i_t, X_{t}^i, A_{t}^i, Z_{t}^i] - \mathbb{E}[Y^i_{a_t,t_m}|H^i_t, X_{t}^i, A_{t}^i, Z_{t}^i]$ for two specified treatments $a$ and $b$.
In practice, we rely on assumptions to be able to estimate $\tau^i_t$ for any possible treatment plan,
which begins at time step $t$ until just before the final patient outcome $Y$ is measured:
\begin{assumption}
\label{Consistency}
\textnormal{Consistency~\cite{lim2018forecasting}.} If $A_{\geq t} = a_{\geq t}$,  then the potential outcomes for following the treatment plan $a_{\geq t}$ is the
same as the observed (factual) outcome $Y_{a_{\geq t}} = Y$.
\end{assumption}

\begin{assumption}
\label{Positivity}
\textnormal{Positivity (Overlap)~\cite{imai2004causal}.} For any patient, if the probability  $P(a_{< t_{m}} , z_{< t_{m}}, x_{\leq t_m} ) \ne 0$ then the probability of assigning treatment:  $P(A_{t_{m}} =a_{t_m}  |a_{< t_{m}} , z_{< t_{m}}, x_{\leq t_m} )> 0 $ for all $a_{t_m}$.
\end{assumption}

Assumption ~\ref{Consistency} and Assumption ~\ref{Positivity} are relatively standard assumptions of causal inference that assume that artificially assigning a treatment has the same impact as if it were naturally assigned and that each treatment has some nonzero probability. 
Additionally, most previous works in the time series domain make the sequential strong ignorability assumption ~\cite{robins2009estimation} that if there are no hidden confounders, for all possible treatments $A_t$, given the historically observed covariates $X_t$, we have: $Y_{a_{\geq t_m}} \perp \!\!\! \perp  A_{t_m}|A_{{< t_{m}}},X_{< t_m}$.
However, this assumption is often untestable due to the presence of hidden confounders in the real-world. Inspired by \cite{wang2019blessings} and  \cite{bica2020time}, {we assume} sequential single strong ignorability in the continuous time setting:

\begin{assumption}
\label{sssi}
\textnormal{Sequential single strong ignorability in continuous time setting.} 
If there exist multi-cause confounders, we have
$Y_{a_{\geq t_m}} \perp \!\!\! \perp  A_{t_m}|X_{t_m}, H_{< t_{m}}$, for all $a_{\geq t_m}$ and all
$j$ possible assigned treatments.
\end{assumption}

Assumption ~\ref{sssi} expands the sequential single strong ignorability assumption from ~\cite{bica2020time} to the continuous time setting.
Thus, only multi-cause hidden confounders exist at every time stamp, having a causal effect on the treatment $A_t$ and potential outcome $Y_t$.
One of our goals is to learn representations of hidden confounders under the line of deconfounding works, which aim to eliminate bias, based on the following theorem:

\begin{theorem}
\label{the1}
If the distribution of the assigned causes $p(a_T )$ can be written as $p(\theta, x_T , z_T , a_T )$, we can obtain sequential ignorable treatment assignment:
\begin{align}
    Y_{a_{\geq t_m}} \perp \!\!\! \perp  A_{t_m} | X_{t_m}, H_{< t_{m}},
\end{align}
for all $a_{\geq t_m}$ with  possible assigned treatments, where $\theta$ are the parameters of the causal model.
\end{theorem}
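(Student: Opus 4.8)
The plan is to follow the factor-model (``deconfounder'') argument of \cite{wang2019blessings}, adapted to the sequential continuous-time setting as in \cite{bica2020time}. The factorization hypothesis---that $p(a_T)$ admits the latent-variable form $p(\theta, x_T, z_T, a_T)$---should be read as positing a factor model in which, conditioned on the parameters $\theta$, the observed covariates $x_T$, and the substitute confounder $z_T$, the $j$ assigned treatments at each time step are mutually conditionally independent. First I would make this defining property explicit: the $z_T$ supplied by the factor model is precisely a variable that renders the multiple causes conditionally independent, so that $p(a_{t_k} \mid x_{t_k}, z_{t_k}, H_{<t_k}, \theta) = \prod_{l=1}^{j} p(a_{t_k}^{(l)} \mid x_{t_k}, z_{t_k}, H_{<t_k}, \theta)$ at every $t_k$.

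Second, I would invoke the central deconfounder lemma: a latent variable that reproduces the joint distribution of the causes captures all multi-cause (shared) confounding. Concretely, using Kallenberg's construction one can exhibit a random variable that is a measurable function of the causes and for which the above conditional-independence factorization holds; any such variable may serve as a version of the true multi-cause confounder. The key consequence is that conditioning on $z_{t_k}$, together with $x_{t_k}$ and $H_{<t_k}$, blocks every confounding path that simultaneously influences two or more treatments and the outcome.

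Third, I would bring in Assumption~\ref{sssi}: by hypothesis the only hidden confounders are multi-cause, i.e.\ there are no single-cause unobserved confounders acting jointly on a single treatment and the outcome. Combined with the previous step, this means that once $z_{t_k}$ absorbs the multi-cause confounding, no unblocked backdoor path from $A_{t_m}$ to $Y_{a_{\geq t_m}}$ remains. Extending the single-step argument across the trajectory by induction on $t_k$---at each step conditioning on the full past $H_{<t_k}$, which contains the previously inferred substitute confounders---then yields the sequential ignorability $Y_{a_{\geq t_m}} \perp \!\!\! \perp A_{t_m} \mid X_{t_m}, H_{<t_m}$ for all $a_{\geq t_m}$ and all $j$ possible treatments.

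I expect the main obstacle to be the second step: rigorously bridging from the purely observational conditional independence of the causes to the causal (potential-outcome) independence that ignorability requires. This is the delicate measure-theoretic heart of the deconfounder argument, and it relies crucially on the no-single-cause-confounding restriction of Assumption~\ref{sssi}; without it the substitute $z_T$ could fail to absorb confounding that acts through a single treatment. A secondary difficulty specific to our setting is ensuring the construction is temporally consistent---that the latent at each step renders the current treatments conditionally independent given the entire, irregularly sampled history---so that the inductive extension to the full continuous-time trajectory remains valid.
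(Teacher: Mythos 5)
Your plan follows the same architecture as the paper's actual proof---read the factor-model hypothesis as licensing a Kallenberg-style randomization of the treatments, then convert that construction into sequential ignorability---but the step you yourself flag as the ``main obstacle'' is exactly the step your proposal leaves open, and it is where the paper's proof does all the real work. Kallenberg's Lemma 2.22 only delivers the randomization half: measurable functions $f_{t_k j}$ and variables $U_{t_k j}\sim \mathrm{Uniform}[0,1]$ with $A_{t_k j}=f_{t_k j}(Z_{t_k},X_{t_k},U_{t_k j})$. It says nothing about the causal half, namely
\begin{align}
U_{t_k j} \perp \!\!\! \perp Y(a_{\geq t_k}) \mid X_{t_k}, H_{t_{(k-1)}}, Z_{t_k},
\end{align}
which is the bridge from observational to potential-outcome independence that you correctly identify as delicate. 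The paper (proof of Lemma~\ref{lemma3}) closes it with a concrete dichotomy that your ``blocks every backdoor path'' gloss does not supply: suppose some variable $V_{t_k}$ (not a.s.\ equal to $Z_{t_k}$ or $X_{t_k}$) confounds $U_{t_k}$ and $Y(a_{\geq t_k})$. If $V_{t_k}$ is time-invariant, it would also confound $U_s$ for every other irregular sample time $s\neq t_k$, inducing dependence among the $U$'s and contradicting their joint i.i.d.\ uniform construction; if $V_{t_k}$ is time-varying, it confounds the treatment $A_{t_k}$ through $U_{t_k}$ and is therefore a multi-cause confounder already absorbed into the substitute $Z_{t_k}$---this is precisely where Assumption~\ref{sssi} enters. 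Note this argument also resolves, for free, the ``temporal consistency across irregular samples'' worry you raise: the i.i.d.\ uniformity of $\{U_{t_k}\}$ across arbitrary sample times is exactly what rules out static confounding of the noise, so no separate continuous-time construction is needed.

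Two smaller divergences from the paper's route. First, your closing induction on $t_k$ is unnecessary: once the per-timestep Kallenberg construction holds, ignorability at $t_m$ follows in a single step (Lemma~\ref{lemma2}), because $A_{t_m j}$ is a measurable function of the triple $(Z_{t_m}, X_{t_m}, U_{t_m j})$, that triple is conditionally independent of $Y(a_{\geq t_m})$ given the conditioning set, and measurable functions inherit conditional independence---no chaining over the trajectory or graphical blocking argument is required. Second, your plan omits the weak regularity condition the paper needs to invoke Kallenberg's Lemma 2.22 at all: the cause domains $\mathcal{A}_j$ must be Borel subsets of compact intervals (w.l.o.g.\ $[0,1]$), since the randomization lemma requires a Borel target space. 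These are fixable, but as written your step two is a statement of the goal rather than a proof of it, and the theorem stands or falls on exactly that step.
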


Thm. ~\ref{the1} is proved by \cite{bica2020time} and \cite{hatt2021sequential} in the discrete case.
Here, we extend Thm. ~\ref{the1} to the continuous-time setting. Nevertheless, 
there are still existing challenges in applying the deconfounder framework to longitudinal data in the continuous time setting.
After its original publication, \cite{wang2019blessings} has been met with concerns of difficulty in reconstructing confounders in practical applications and the deconfounder assumption itself has been challenged.
Towards the necessity of further constraints on the latent confounding, we introduce a frequency-based Lipschitz assumption on the structure of the hidden confounders in Assumption \ref{assump_freq_lip}.

\begin{assumption}
\label{assump_freq_lip}
\textnormal{Decomposition of time-varying hidden confounders.}
The hidden confounders $Z_t$ can be decomposed into  high-frequency components $Z^h_t$ and low-frequency $Z^l_t$ with distinguishable frequency gap $\omega$, i.e., $Z_{t_m} = (Z^h_{t_m}, Z^l_{t_m})$ such that low-frequency confounders have Lipschitz bounded influence and high-frequency confounders are sufficiently covered by proxy variables in $X_t$.
\end{assumption} 

In this sense, we combine two existing extensions of TSD under a unifying assumption.
$Z^l_{t}$ contains smooth information  (the trend of the confounding data) bounded by its maximal frequency $\omega_l$.
The  functional outcomes are then Lipschitz bounded by constant $L$.
Further, its distance and influence from its original value $Z_0$ will be bounded, reflecting its bounded variation from a static confounder $U$, as explored in \cite{hatt2021sequential}.
Further, the high-frequency components are assumed to have corresponding noisy proxy variables available in the measured covariates $X$.
Consequently, sufficient information about these high-frequency confounders can be derived from the observed proxy variables, as explored in \cite{kuzmanovic2021deconfounding}.
Unified together, our assumption explores a semiparametric assumption enhancing the practicality of applying the deconfounder setup to longitudinal data.

\begin{figure*}[t]
\centering
\includegraphics[width=\linewidth]{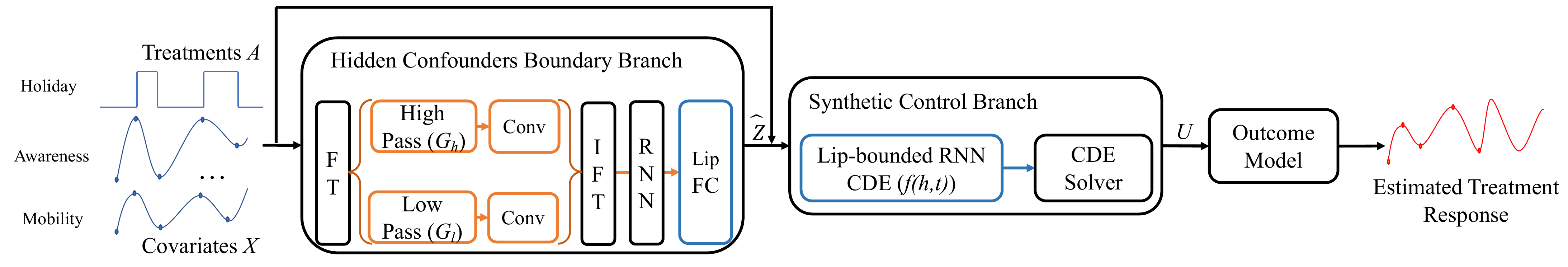}
\caption{Architecture of LipCDE.}
\label{fig:network}
\end{figure*}


\subsection{Neural Controlled Differential Equations}

 Starting from an initial state $u{(t_0)}$, neural ordinary differential equations (ODE) evolve following neural network based differential equations. The state at any time $t_i$ is given by integrating an ODE forward in time:
\begin{align}
    \frac{du(t)}{dt} = F(u(t),t;\theta), u(t_i) = u(t_0)+\int_{t_0}^{t_i} \frac{du(t)}{dt} dt,
\end{align}
where $F \in \mathcal{F}$, parametrized by $\theta$ with ($\mathcal{F}, || \cdot ||$) a normed vector space and $u{(t_0)}$ is the initial state. Neural CDEs are a family of continuous time models that explicitly define the latent vector field $f_{\theta}$ by a neural network parameterized by $\theta$, and allow for the dynamics to be modulated by the values of an auxiliary path over time. To constrain the ODE into  CDE format, let $\textbf{H}_t=(H^1_t,H^2_t,\cdots,H^n_t):t\in[t_0,t_m] \rightarrow \mathbb{R}^{n\times m}$ be the $m$ dimensional representation of historical  data with all $n$ observed history control paths, the integral be a Riemann-Stieltjes integral and $F$ be a continuous function acting on all control path~\cite{kidger2020neural}.  For continuous time synthetic control, we estimate the latent representation of treatment effect  $H_t$ through $H_t = H_{t_0} + \int_{t_0}^t f_\theta(H_s)d\textbf{H}_s, t\in (t_0, t_m].$

\section{Lipschitz Bounded Neural Controlled Differential Equations (LipCDE)}

To address the  treatment effect estimation task from irregular time series observation, we must avoid inference bias caused by hidden confounders. Thus, we propose an approach called Lipschitz bounded neural controlled differential equations (LipCDE). As shown in Figure~\ref{fig:network}, LipCDE  first infers the interrelationship of hidden confounders on treatment by bounding the boundary of hidden confounders via the hidden confounders boundary branch. After that,  LipCDE 
feeds the history trajectories into the synthetic control branch, which utilizes  both observed data and hidden confounders to  generate the latent representation of each patient. Besides, we re-weight the population of all participating patients and balance the representation via applying a time-varying  inverse probability of treatment weighting (IPTW) strategy. Combined with the LSTM layer, the outcome model can get the final estimate of the treatment effect.

\subsection{Hidden Confounders Boundary Branch }

In this section, we focus on how to use Lipschitz regularized convolutional operation to infer the hidden confounders from both high-frequency signals and low-frequency signals of observed data. 
As shown in Fig~\ref{fig:network}, the Fourier transform  $\mathcal{F}$ on observed data first converts the time-domain signals of history trajectories $h_t$ ~\cite{cao2020spectral, cao2021spectral}, including covariates and treatments with length $N$, into the corresponding amplitude and phase at different frequencies. Then, we sort the spectrum so that the spectrum corresponding to low-frequency information is concentrated at the origin after  Fourier transform, and high-frequency information is far from the origin and contains rich boundary and detail information.  
After that, we use Gaussian high-pass filter $G_{h}$ and Gaussian low-pass filter $G_{l}$ to get high-frequency components and low-frequency components, respectively: 

\begin{align}
\left\{\begin{aligned}
G_h(h_t) = G_{h}(\mathcal{F}(h_t)) =1-e^{\frac{-D^2(\mathcal{F}(h_t))}{2D^2_0}} \\
G_l(h_t) = G_{l}(\mathcal{F}(h_t)) =e^{\frac{-D^2(\mathcal{F}(h_t))}{2D^2_0}}
\end{aligned}\right.
\end{align}
The use of spectral-domain analysis enables change detection in certain frequency bands where the influence of trends (low frequency) or daily and seasonal cycles can be considered as time-invariant hidden confounders.
The high-frequency components are easily perturbed, which can be treated as noisy proxies. We extract the influence of hidden confounders on the covariates by analyzing the presence of the covariates we extract. After that, both components are fed  into convolutional operation: 
\begin{align}
    F_c(h_t) = Conv(G_h(h_t)) + Conv(G_l(h_t)) 
\end{align}

Next, we use the inverse Fourier transform $\mathcal{F}^{-1}$ converts the spectrum information of latent representation 
back to the time-domain signals. 
Then, the RNN layer takes the representation $\mathcal{F}^{-1}(F_c(h_t))$ as input and outputs the  hidden states $h_{hc}$ of hidden confounders.  Note that, after the Fourier transform, time series no longer consider specific timesteps in the spectral domain. In addition,  in contrast to directly handling irregular time series as ~\cite{ware1998fast},
we use the processing of the Fourier transform as a mathematical component without considering time intervals, and irregular sampling is enabled in the next component.

To define the boundary of hidden confounders' value interval,  following the RNN layer, the confounders encoder uses a  Lipschitz bounded linear fully-connected (FC) layer  with Lipschitz regularization~\cite{perugachidiaz2021invertible} to map the output of RNN layer into a hidden embedding, i.e. $z =g(h_{hc})=  W_{g}h_{hc} +b_g$.
The function $g : \mathbb{R}^n \rightarrow \mathbb{R}^K$ can be said as $L$-Lipschitz if there exists an $L$ such that for all $x,y \in \mathbb{R}^n$, we have $||f(x)-f(y)||\leq L||x-y||$~\cite{Bethune2021TheMF}. 
In this work, we enforce the function $g$ to satisfy the 1-Lipschitz constraint, where $g$ is  the linear FC layer. 
Following spectral normalization of \cite{gouk2021regularisation}:
\begin{align}
    \text{Lip}(g)\leq 1, \text{if}\hspace{.5em} ||W_{g}||_2 \leq 1,
\end{align}

where $||\cdot||_2$ is the spectral matrix norm, we enforce the linear weights $W_{g}$ to be at most 1-Lipschitz by having a spectral norm of less than one. This constraint ensures that when the observed data is within the normal interval, the inferred hidden confounders satisfy the corresponding bound interval with constant $L$.

\subsection{Synthetic Control Branch}

Since the neural ordinary differential equations(ODE) family is effective in continuous time problems, we use neural CDE to estimate latent factors and treatment effects. 
Inspired by~\cite{Bellot2021PolicyAU}, 
let $u_{t}:=g_{\eta}\left(H_{t}\right)= g_{\eta} \left([x_t,a_t,\hat{z}_t, H_{t-1}]\right)$, 
where $g_{\eta}: \mathbb{R}^{n\times m} \rightarrow \mathbb{R}^{l \times m}$ is a set of functions 
that embeds the historical data into a $l$-dimensional latent state.
 Let $f$ be a neural network parameterizing the latent vector field.
To apply Lipschitz constraint on $f$, following~\cite{erichson2020Lipschitz}, we define $f$ as a continuous time Lipschitz RNN:
\begin{align}
f(h,t)=A_R h+ \sigma \left(W_Rh+U u(s)+b\right),
\end{align}
where hidden-to-hidden matrices $A_{R}$ and $W_R$ are trainable matrices and nonlinearity $\sigma(\cdot)$ is an 1-Lipschitz function. 
Now $\dot{f}=\frac{\partial f(t)}{\partial t}$ is the time derivative and $f$  considers both controlling the history path of observed data and the hidden state of RNN.
A latent path can be expressed as the solution to a controlled differential equation of the form:
\begin{align}
\label{equ}
u_{t}=u_{t_{0}}+\int_{t_{0}}^{t} f\left(u_{s}, s\right) d \mathbf{H}_{s}^{0}, \quad t \in\left(t_{0}, t_{m}\right]
\end{align}

In that way, we can directly utilize adjoint methods ~\cite{chen2018neural} of CDEs to enable computing the gradient with a dynamic causal relationship between historical information controlled by $\mathbf{H}$ and outcomes. For each estimate of $f_{\theta}$ and $g_{\eta}$ the forward latent trajectory in time that these functions defined through (\ref{equ}) can be computed using any numerical ODE solver as those equations continuously incorporate incoming interventions, without interrupting the differential equation:
\begin{align}
  \hat{u}_{t_{1}}, \ldots, \hat{u}_{t_{k}}=\operatorname{ODESolve}\left(f_{\theta}, u_{t_{0}}, \mathbf{H}_{t_{1}}, \ldots, \mathbf{H}_{t_{k}}\right)  
\end{align}

\subsection{Outcome Model}
After sampling the latent representation $U_t = (\hat{u}_{t_{1}}, \ldots, \hat{u}_{t_{k}})$ of historical trajectories on each patient, we use the outcome model to estimate the treatment effect. 
To adjust the treatment assignment and get the final estimates,
we first re-weight the population via an RNN model, which can handle time-varying treatment assignment~\cite{lim2018forecasting}, to estimate the propensity scores and  IPTW of each dynamic time step.
After that, we use two stacked LSTM layers to decode the padded hidden sequence  of irregular inputs. Then we use a linear fully-connected layer mapping the output of the LSTM layer into an unbiased estimated treatment response over time. For the loss function part, we weight each patient via the generated score of IPTW, $w^i$, and use the  mean squared error (MSE) function as our target loss function: $L =\frac{1}{N}\sum_{i=1}^{N}w^i(\hat{y}^i_{t_{m+1}} - y^i_{t_{m+1}})^2$.

Empirically, the identifiability can be assessed on the synthetic data via sample hidden confounders $Z_t$ repeatedly to evaluate the uncertainty of the outcome model estimates. However, identifiability might not be guaranteed under the framework of deconfounding in the completely general case \cite{DAmour2019OnMC,ogburn2020counterexamplesToDeconfounder}. Previous works find that the estimates may have a high variance when the treatment effects are non-identifiable ~\cite{bica2020time, hatt2021sequential, kuzmanovic2021deconfounding}. To achieve the goal of identifiability and obtain unbiased ITE estimates, ~\cite{hatt2021sequential} introduces the assumption of \textit{Time-Invariant Unobserved Confounding}, which requires the hidden confounders are invariant for different timestamps, and ~\cite{kuzmanovic2021deconfounding} claim that we can learn the hidden embedding to make \textit{Sequential Strong Ignorability} assumption hold via the observed noised proxies. Thus, the  greater identifiability of our work follows both ~\cite{hatt2021sequential} and ~\cite{kuzmanovic2021deconfounding} as it utilizes both time-invariant hidden confounders from low-frequency components  and dynamic noisy proxies from the high-frequency component of the observed data  simultaneously in practice.

\begin{figure*}[t]
\centering

\subfigure[RMSE results on treatment effects]{
\begin{minipage}[t]{0.5\linewidth}
\centering
\includegraphics[height=4.5cm]{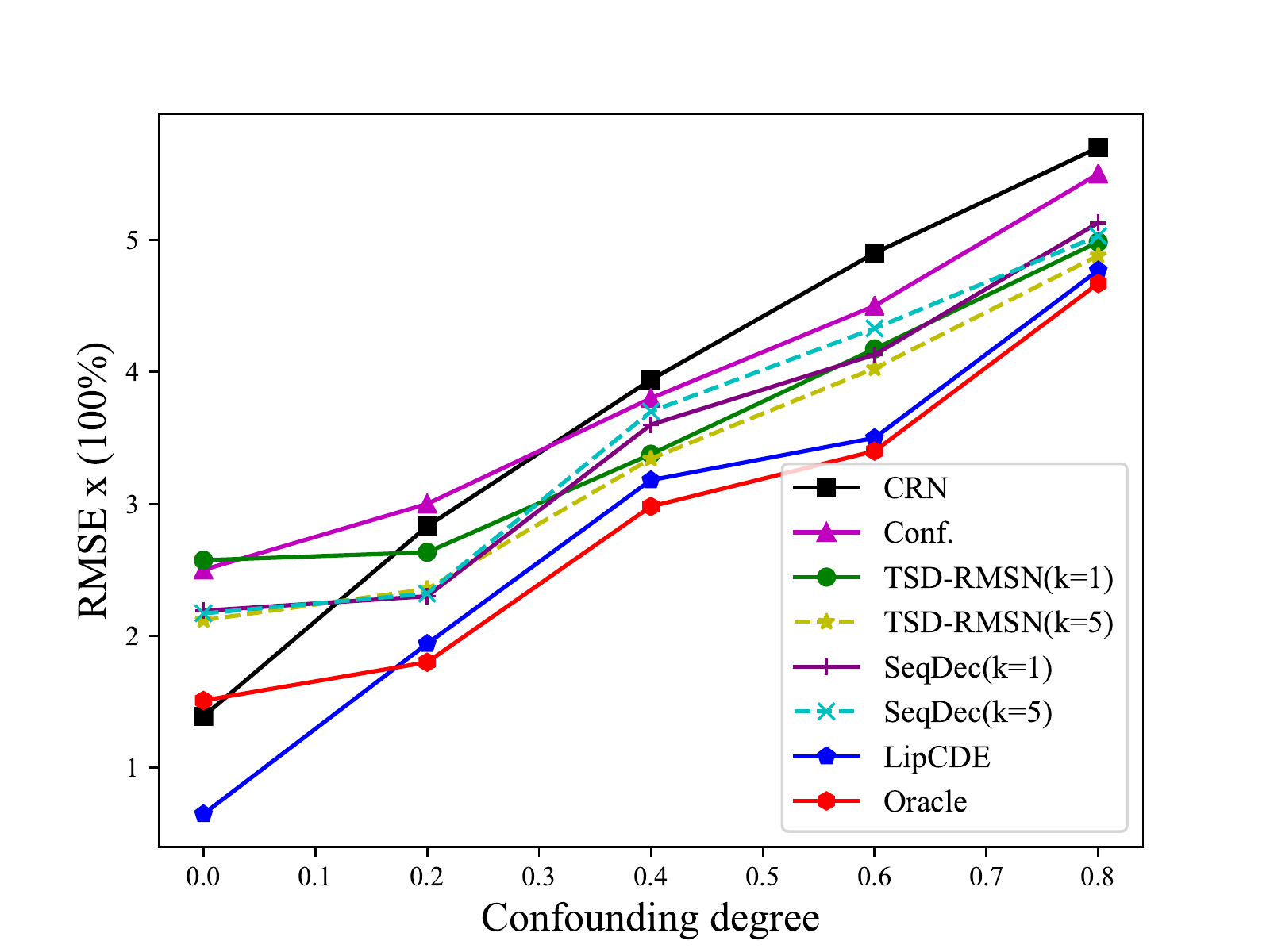}
\label{analysis_fin1}
\end{minipage}%
}%
\subfigure[RMSE results on counterfactual treatment effects]{
\begin{minipage}[t]{0.5\linewidth}
\centering
\includegraphics[height=4.5cm]{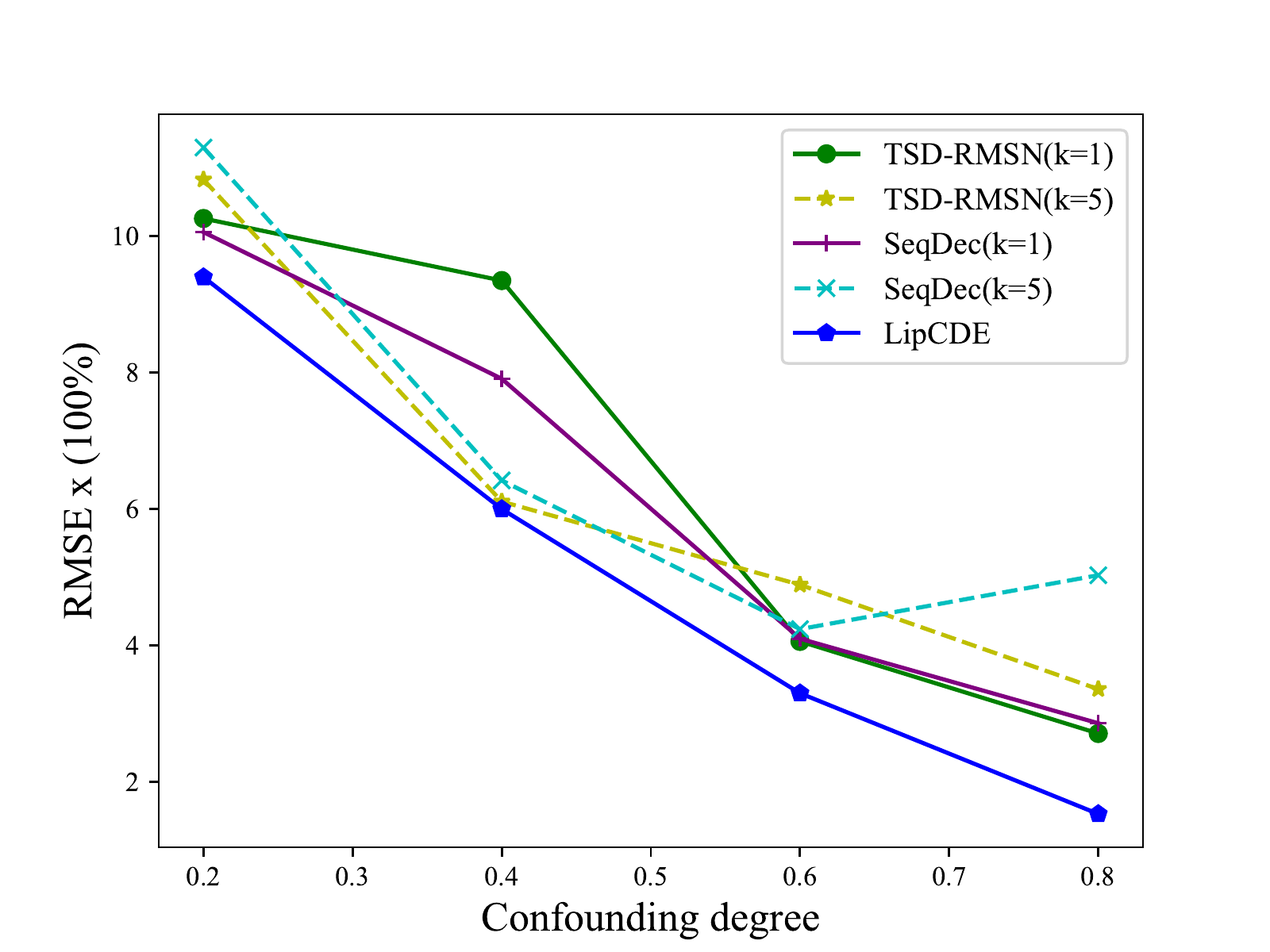}
\label{analysis_fin2}
\end{minipage}%
}

\caption{Results on synthetic data}
\label{analysis_fin3}
\end{figure*}
\section{Experiments}

\subsection{Experiments Setting}

In this section, we estimate the treatment effects for each time step by one-step ahead predictions on both synthetic dataset and real-world datasets including MIMIC-III~\cite{johnson2016mimic} dataset and COVID-19~\cite{steiger2020causal} dataset. Hidden confounders in such real-world datasets is present as variables not included in the records. However, for real-world data, it is untestable to estimate the oracle treatment responses and we only evaluate the factual treatment effects.

\textbf{Baselines.} LipCDE is evaluated by examining the degree of control it has over hidden confounders.
The baselines used in these experiments are: \textbf{Oracle}, which estimates ITE with simulated (oracle) confounders; \textbf{Conf. (No-hidden)}, which assumes no hidden confounders and can make it clear how hidden confounders here impact the performance of treatment effect prediction models; \textbf{CRN}~\cite{Bica2020Estimating}, which introduces a sequence-to-sequence counterfactual recurrent network to estimate treatment effects and utilizes domain adversarial training to  handle the bias from time-varying confounders;
\textbf{TSD}~\cite{bica2020time}, which leverages the assignment of multiple treatments over time to enable the estimation of treatment effects in the presence of multi-cause hidden confounders; \textbf{DTA}~\cite{kuzmanovic2021deconfounding}, which combines a 
LSTM autoencoder with a causal regularization penalty to learn dynamic noisy proxies and render the potential outcomes and treatment assignment conditionally independent; 
\textbf{SeqDec}~\cite{hatt2021sequential}
, which utilizes a Gaussian process latent variable model to infer substitutes for the hidden confounders; \textbf{OriCDE}~\cite{Bellot2021PolicyAU}, which can estimate ITE explicitly using the formalism of linear controlled differential equations. 

\textbf{Outcome model. } Except OriCDE, all baselines share the same design of the outcome model, i.e. \textit{MSM}~\cite{robins2000marginal}, which uses inverse probability of treatment weighting (IPTW) to adjust for the time-dependent confounding bias by linear regression and then constructs a pseudo-population to compute final outcome, and \textit{RMSN}~\cite{lim2018forecasting}, which estimates IPTW using RNNs instead of logistic regressions. OriCDE and LipCDE use the outcome model introduced in previous section. 

\begin{table*}[htpb]
\centering
\small
\begin{tabular}{ccccccccccc}
\hline\hline
Outcome Model & -& \multicolumn{3}{c}{MSM (RMSE\%)}& \multicolumn{4}{c}{RMSN (RMSE\%)}& \multicolumn{2}{c}{Ours (RMSE\%)}                 \\ \cline{2-11} 
Methods       & \multicolumn{1}{c|}{CRN}&Conf.  & DTA   & \multicolumn{1}{c|}{TSD}   & Conf.  & DTA   & TSD   & \multicolumn{1}{c|}{SeqDec} & OriCDE & LipCDE                          \\ \hline
 Blo. pre.     & \multicolumn{1}{c|}{12.43}&14.54 & 13.31 & \multicolumn{1}{c|}{13.57} & 14.46 & 18.33 & 12.11 & \multicolumn{1}{c|}{13.74}  & 10.55  & \textbf{9.19}  \\ 
 Oxy. sat.     &\multicolumn{1}{c|}{4.17}& 4.72  & 4.65  & \multicolumn{1}{c|}{4.33}  & 4.22  & 4.21  & 4.25  & \multicolumn{1}{c|}{4.19}   & 4.24   & \textbf{4.15 } \\ 
 COVID-19      & \multicolumn{1}{c|}{-}&15.10 & 13.93 & \multicolumn{1}{c|}{13.07} & 11.48 & 13.52 & 11.08 & \multicolumn{1}{c|}{11.43}  & 11.36  & \textbf{7.56}  \\ \hline \hline
\end{tabular}

\caption{Results for real-world data (MIMIC-III and COVID-19) experiments. Lower is better.}
\label{tab:mimic_real}
\end{table*}
\begin{table*}[t]
\centering
\small
\begin{tabular}{c|c|cccc|c|cccc}

\hline \hline 
Degree &MR                                & Conf. & TSD  & SeqDec & LipCDE & MR                                & Conf. & TSD  & SeqDec & LipCDE\\ \hline 
0& \multirow{3}{*}{15\%}                          & 3.43   & 2.83 & 2.43   & \textbf{1.19} & \multirow{3}{*}{30\%}  &  3.32   & 2.84 & 3.19   & \textbf{2.29}  \\
                                   0.2    &       &  3.47   & 2.84 & 2.69   & \textbf{2.6} &&4.66   & 3.65 & 2.95   & \textbf{2.62}    \\
                                0.4    &           & 3.45   & 3.67 & 3.7    & \textbf{3.39}  && 4.19   & 4.06 & 3.89   & \textbf{3.61}   \\ 
                                 \hline \hline
\end{tabular}
\caption{Irregular data with missing value}
\label{tab:irr}
\end{table*}

\subsection{Estimating Treatment Effects Experiments }

\textbf{Synthetic experiments.}
For the synthetic dataset, we can simulate data in which we are able to control the amount of hidden confounders and decide the treatment plan. Therefore in addition to estimating factual treatment responses, we will also perturb the inputs to quantify how accurate counterfactual relationships are captured by LipCDE.
Following~\cite{bica2020time}, we have $T=30$ max time steps and $N=5000$ patient trajectories, where each patient has $p =5$ observed covariates and different treatments.  We vary the confounding degree parameter $\gamma$ to produce a varying amount of hidden confounders.  
Factual results use the outcome results corresponding to the real-world treatment we simulate. For the counterfactual estimations, we set all the treatments to 0 at the timestamp interval of $[\frac{l_i}{2},l_i]$, where $l_i$ is the sequential length of patient $i$, and get the outcome of the counterfactual world. 
As shown on Figure~\ref{analysis_fin3}, for the factual treatment effects results, methods considering hidden confounders are  generally better than the models without the hidden confounders (CRN, Conf.). Note that, LipCDE achieves better results on all different levels of confounders and its outcome is closest to the estimates obtained using simulated (oracle) confounders, which means LipCDE can yield less biased estimates compared with other baselines. In addition, LipCDE remains stable and becomes closer to the simulated (oracle) confounders baseline when we increase the degree of confounders' influence, which indicates that our model can effectively constrain the influence boundary of hidden confounders based on observed data. For the counterfactual path results, we interestingly observe that the RMSE decreases as the confounding degree increases. The reason is that  when the degree increase, $Z_t$ gets easier to handle with fixed treatment plans referring to the data generation method. Besides, LipCDE still performs better than the current baselines in the counterfactual world, indicating the stability of LipCDE for hidden confounder reasoning and the validity of the estimation.

\textbf{Real-world experiments on MIMIC-III \& COVID-19.}  
real-world data allow us to demonstrate LipCDE has strong scalability and interpretability in real-world applications. 
MIMIC-III dataset
contains 5000 patient records with 3 treatments, 20 covariates of patients and 2 outcomes including blood pressure (Blo. pre.), and oxygen saturation (Oxy. sat.). 
The COVID-19 dataset contains 401 German districts over the period of 15 February to 8 July 2020. We extract 10 time-varying covariates and 2 treatments with 2 outcomes, 'active cases', in each district. 
The results in Table \ref{tab:mimic_real} show that LipCDE outperforms existing baselines in all cases. By modeling the dependence of the assigned treatment for each patient, LipCDE is able to infer latent variables and make orderly use of the causal relationship between latent variables and observed data. This result is consistent with what we have seen in the simulated dataset.
Specifically, the average  RMSE on MIMIC-III's blood pressure outcome and COVID-19 datasets is decreased by 28.7\%  and 32.3\% over TSD and SeqConf respectively. 
Besides, the small increase in oxygen saturation is thought to be due to the fact that oxygen saturation itself is not dependent on current covariates and is less influenced by treatment.
Although these results on real data require further validation by physicians, they demonstrate the potential of the method to be applied in real medical scenarios.

\begin{figure}[h]
\centering
\includegraphics[width=\linewidth]{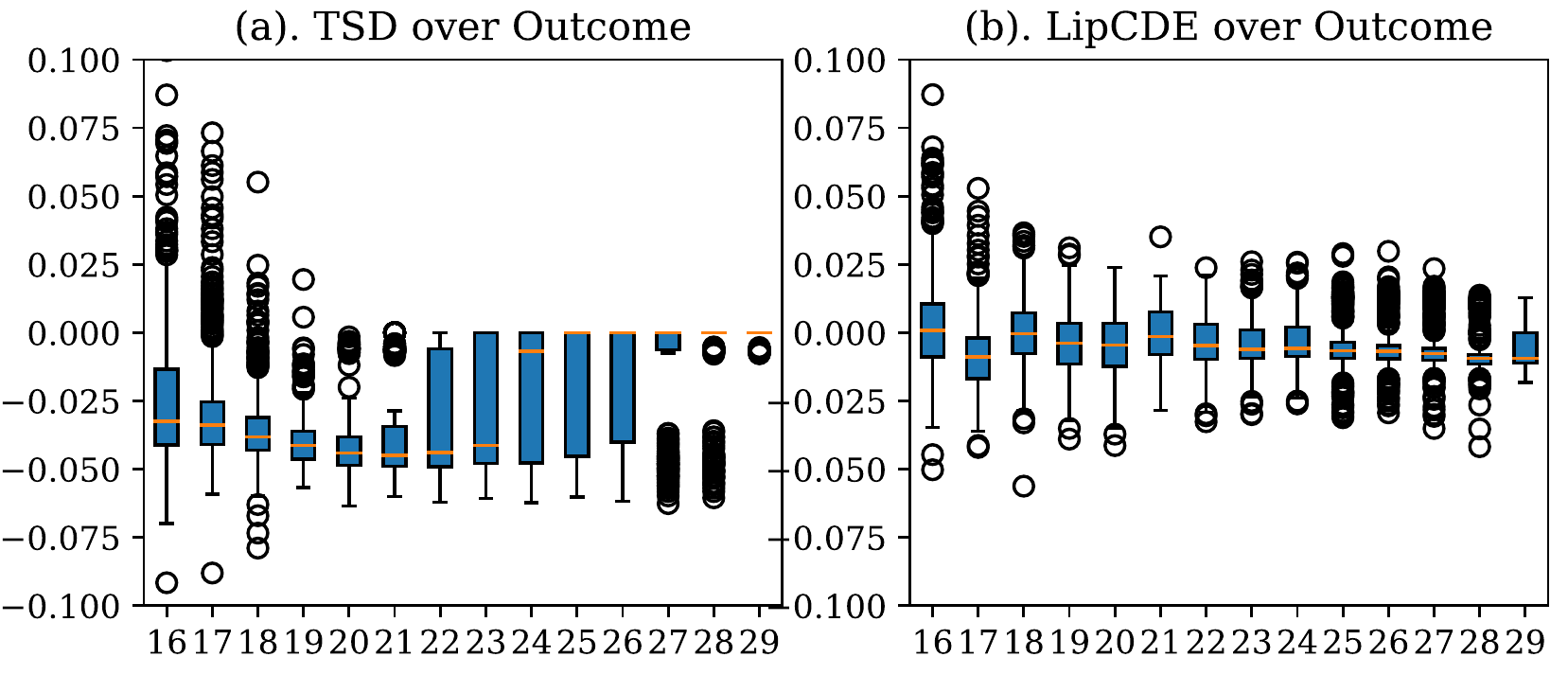}
\caption{Analysis of the outcome  on synthetic data's counterfactual path. Compared with baseline models, LipCDE can estimate treatment effects with lower variance}
\label{fig:box_0.8}
\end{figure}

\section{Analysis}

\textbf{Irregular time series with missing values.} We emphasize that our model is suitable for irregular time series sampling. Therefore, we randomly remove 15\% and 30\% of the aligned synthetic data with different confounding degrees, independently for each unit. Except for CDE-based methods, all the baselines require some form of prior interpolation. 
Results shown in Table~\ref{tab:irr} demonstrate that our model achieves a comparable  performance with irregularly aligned data. 
Note that, compared with SeqDec which only models irregular samples via an indirect simple multivariate Gaussian distribution, LipCDE shows the ability to handle continuous time settings by utilizing the CDE module.

\textbf{Analysis on bounded hidden confounders.} 
We perform the analysis using simulated datasets and evaluate the hidden confounder's quality on  LipCDE with TSD  and SeqDec. As shown on Figure~\ref{fig:box_0.8}, LipCDE can achieve better estimate results with lower variance compared with the previous strong baseline. 
Further, we find that TSD can induce highly confident posterior distributions with lower bounds of the hidden confounders, which can yield highly confident biased predictions~\cite{zheng2021bayesian}. 
The seqDec model has more discrete points and no obvious boundary, which also leads to the degradation of the model performance. LipCDE controls the data distribution of hidden confounders more accurately by filtering the convolutional neural network and Lipschitz regularization, which has  higher similarity to the originally hidden confounder compared with other baselines.

\section{Conclusion}

In this paper, we proposed the Lipschitz-bounded neural controlled differential equation (LipCDE), a novel neural network that utilizes hidden confounders for estimating treatment effect in the case of irregular time series observations. For one thing, it uses the performance of time-varying observations in the frequency domain to infer the hidden confounders under Lipschitz regularization.  For another thing, a well-designed CDE explicitly models  the combinational latent path of observed time series, which can effectively capture underlying temporal dynamics and intervention effects. 
With experimental results on synthetic and real datasets, we demonstrate the effectiveness of LipCDE in reducing bias in the task of estimating  treatment effects.

\subsubsection{Acknowledgements}  This project is partially supported by KDDI Research, Inc.  Defu Cao, James Enouen, and Chuizheng Meng are partially supported by 
the Annenberg Fellowship of the University of Southern California.  

\newpage
\nocite{langley00}

\bibliography{aaai23}

\appendix
\onecolumn


\section{ Details for Experiments}
\label{app_exp_detail}
\subsection{Experiments Setting}
LipCDE described in Experiments Section  was implemented in PyTroch and trained on 4 NVIDIA GeForce RTX 2080 TI GPUs. We adopt the Adam~\cite{kingma2014adam} optimizer with learning rate 0.01. The training epoch is set as 10 with 10 iterations on each batch. The batch size is 16 and each dataset follows a 80\%/10\%/10\% split for training/validation/testing respectively.  LipCDE is trained by an end-to-end way with the same loss function with ~\cite{Bellot2021PolicyAU}. In addition, we report the average RMSE over 10 model runs on each experiment and report the mean results of LipCDE.

Specifically, in hidden confounders boundary branch, we apply RNN on that branch with 32 hidden units which are decided by grid search from [16, 32, 64, 128] on the validation dataset and then apply Lipschitz constraint on FC layer without tunable hyperparameters. For the synthetic control branch, we adapt Lipschitz bounded RNN from ~\cite{erichson2020lipschitz2} with:
\begin{align}
\left\{\begin{aligned}
A_{\beta_{A}, \gamma_{A}} &=\left(1-\beta_{A}\right)\left(M_{A}+M_{A}^{T}\right)+\beta_{A}\left(M_{A}-M_{A}^{T}\right)-\gamma_{A} I \\
W_{\beta_{W}, \gamma_{W}} &=\left(1-\beta_{W}\right)\left(M_{W}+M_{W}^{T}\right)+\beta_{W}\left(M_{W}-M_{W}^{T}\right)-\gamma_{W} I
\end{aligned}\right.
\end{align}
where $\beta_{A}, \beta_{W} \in[0,1], \gamma_{A}, \gamma_{W}>0$ are tunable parameters and hidden-to-hidden matrices $A_{\beta, \gamma} \in \mathbb{R}^{N \times N}$ and $W_{\beta, \gamma} \in \mathbb{R}^{N \times N}$ are trainable matrices. The LSTM layers in our outcome model are with 64,32 hidden states, respectively, which are also decided by grid search.

\subsection{Simulated Dataset }
\label{app_exp_sim}

Synthetic data allows us to simulate data in which we can control the amount of hidden confounders.  
Following~\cite{bica2020time}, the observed data of patients is $H = (\{x_t^i,a_t^i,y_{t+1}^i\}^T_{t=1})^N_{i=1}$ and the hidden confounders is $(\{z_t^{i}\}^T_{t=1})^N_{i=1}$, where we have $T=30$ max time steps and $N=5000$ patient trajectories, where each patient has $p =5$ observed covariates and different treatments.  We vary the confounding degree parameter $\gamma \in \{0, 0.2, 0.4, 0.6, 0.8\} $ to produce a varying amount of hidden confounders.   
Factual results use the outcome results corresponding to the real-world treatment we simulate. For the counterfactual path, we set all the treatments to 0 at the timestamp interval of $[\frac{l_i}{2},l_i]$, where $l_i$ is the sequential length of patient $i$, and get the outcome of the counterfactual world. Then, we calculate the RMSE of model simulated outcome and counterfactual world outcome.

We build a dataset using $p$-order autoregressive processes. At each timestep $t$, we simulate $k$ time-varying covariates $X_{t, k}$ representing single cause confounders and a multi-cause hidden confounders $Z_{t}$ as follows:
\begin{align}
X_{t, j} &=\frac{1}{p} \sum_{i=1}^{p}\left(\alpha_{i, j} X_{t-i, j}+\omega_{i, j} A_{t-i, j}\right)+\eta_{t} 
\end{align}
\begin{align}
Z_{t} &=\frac{1}{p} \sum_{i=1}^{p}\left(\beta_{i} Z_{t-i}+\sum_{j=1}^{k} \lambda_{i, j} A_{t-i, j}\right)+\epsilon_{t}
\end{align}

for $j=1, \ldots, k, \alpha_{i, k}, \lambda_{i, j} \sim \mathcal{N}\left(0,0.5^{2}\right), \omega_{i, k}, \beta_{i} \sim$ $\mathcal{N}\left(1-(i / p),(1 / p)^{2}\right)$, and $\eta_{t}, \epsilon_{t} \sim \mathcal{N}\left(0,0.01^{2}\right)$. The value of $Z_{t}$ changes over time and is affected by the treatment
assignments.

Each treatment assignment $A_{t, j}$ depends on the single-cause confounders $X_{t, j}$ and multi-cause hidden confounders $Z_{t}$ :
\begin{align}
\pi_{t j} &=\gamma_{A} \hat{Z}_{t}+\left(1-\gamma_{A}\right) \hat{X}_{t j} \\
A_{t j} \mid \pi_{t j} & \sim \operatorname{Bernoulli}\left(\sigma\left(\lambda \pi_{t j}\right)\right)
\end{align}
where $\hat{X}_{t j}$ and $\hat{Z}_{t}$ are the sum of the covariates and confounders respectively over the last $p$ timestamps, $\lambda=15$, $\sigma(\cdot)$ is the sigmoid function and $\gamma_{A}$ controls the amount of hidden confounding applied to the treatment assignments. The outcomes are also obtained as a function of covariates and hidden confounders.
\begin{align}
\mathbf{Y}_{t+1}=\gamma_{Y} Z_{t+1}+\left(1-\gamma_{Y}\right)\left(\frac{1}{k} \sum_{j=1}^{k} X_{t+1, j}\right),
\end{align}
where $\gamma_{Y}$ controls the amount of hidden confounding applied to the outcome. We simulate datasets consisting of 5000 patients, with trajectories between 20 and 30 timestamps, and $k=3$ covariates and treatments. To induce time dependencies we set $p=5$.

\subsection{real-world dataset}
~\label{Realdata}

MIMIC-III dataset
contains three treatment options: antibiotics, vasopressors, and mechanical ventilators. We extract 20 covariates of patients, including laboratory tests and vital signs for each patient and 2 outcomes including blood pressure (Blo. pre.), and oxygen saturation (Oxy. sat.). We extract up to 30 days of 5000 patient records from the dataset for training and testing following the same setting with \cite{bica2020time}, and infer treatment response within 24 hours. 

Then, we apply experiments on  the COVID-19 dataset, which contains 401 German districts over the period of 15 February to 8 July 2020.  We extract 10 time-varying covariates which focus on mobility including parks mobility,  workplaces mobility, etc; weather including rainfall and  temperature; awareness such as searches corona, etc. The task is to infer the effects of multiple treatments including 'Holiday' and 'Weekday' over the 10 covariates for the outcome 'activate cases' in each district.%

\section{Additional Analysis}
\label{app_analysis}
In general, models that do not take into account hidden confounders cannot obtain correct causality because they assume that treatment assignment only depends on the observed history, which means that any unobserved probability confounder can lead to a biased estimate of outcome. LipCDE obtains the final representation used to infer the outcome by analysing all possible factors, thus reducing the bias caused by the presence of hidden confounders. In this section, we show additional LipCDE analysis for Analysis Section.

\begin{table*}[htbp]
\centering
\begin{tabular}{cccccc}
\hline\hline
Miss  Rate                                   & Degree & Conf. & TSD-RMSN(K=5)  & SeqDec(K=5) & LipCDE \\ \hline 
\multirow{5}{*}{0\%}                     & 0      & 2.5   & 2.11 & 2.17   & \textbf{0.65}   \\
                                          & 0.2    & 3.01   & 2.633 & 2.32   & \textbf{1.94}    \\
                                          & 0.4    & 3.83   & 3.37 & 3.70    & \textbf{3.18}   \\ 
                                          & 0.6    & 4.51   & 4.17 & 4.33   & \textbf{3.50}   \\
                                            & 0.8    & 5.55   & 4.98 & 5.03    & \textbf{4.77}   \\\hline \hline 
\multirow{5}{*}{15\%}                     & 0      & 3.43   & 2.83 & 2.43   & \textbf{1.19}   \\
                                          & 0.2    & 3.47   & 2.84 & 2.69   & \textbf{2.60}    \\
                                          & 0.4    & 3.45   & 3.67 & 3.70    & \textbf{3.39}   \\
                                          & 0.6    & 6.28  & 4.94 & 5.28    & \textbf{4.45}   \\
                                            & 0.8    & 7.07  & \textbf{5.48} & 6.17    & 5.62   \\
                                            \hline \hline 
\multirow{5}{*}{30\%} & 0      & 3.32   & 2.84 & 3.19   & \textbf{2.29}   \\
\multicolumn{1}{c}{}                      & 0.2    & 4.66   & 3.65 & 2.95   & \textbf{2.62}  \\
\multicolumn{1}{c}{}                      & 0.4    & 4.19   & 4.06 & 3.89   & \textbf{3.61}  
\\
 & 0.6    & 6.90   & 6.87 & 6.33   & \textbf{4.66}   \\
  & 0.8    & 10.06  & 7.70 & 8.01  & \textbf{5.91}    \\\hline\hline
   
\end{tabular}
\caption{Irregular data with missing data rate in \{0\%, 15\%, 30\%\}.}
\label{tab:irr_app}
\end{table*}

\begin{table}[t]
\setlength{\abovecaptionskip}{0.5cm} 
\centering
\begin{tabular}{ccccc}
\hline \hline Outcome  &\multirow{2}{*}{Method} &\multicolumn{3}{c}{ RMSE (\%) } \\
\cline { 3 - 5} model&& Blo. pre. & Oxy. sat. & COVID-19 \\
\hline 
\multirow{5}{*}{Ours} 
&LipCDE & $\textbf{9.19}\pm 0.36 $ & $\textbf{4.15} \pm 0.37$& $\textbf{7.56} \pm 0.59$ \\
&-\textit{w/o-hc} & $10.54 $ & $4.27$ & $10.69$  \\
&-\textit{w/o-lip} & $9.51 $ & $4.24$ & $10.5$  \\
&-\textit{w/o-high} & $10.10 $ & $4.21$ & $10.69$  \\
&-\textit{w/o-low} & $9.73 $ & $4.29$ & $10.5$  \\
\hline \hline
\end{tabular}
\caption{Ablation study for real-world data (MIMIC-III and COVID-19) experiments. Lower is better.}
\label{tab:mimic_app}
\end{table}

\subsection{Irregular time series with missing values}
For \textbf{irregular time series with missing values} analysis part, we  add the results when there is no data missing. It can be seen  from Table~\ref{tab:irr_app} that our model outperforms the baseline in all cases. 

\subsection{Ablation study}
For \textbf{ablation study} part,  we conduct ablation experiments to verify the effectiveness of the proposed components on the real-world datasets, which can clearly prove that our proposed method is an effective model for estimating treatment effect. Here, we discuss 4 variants of LipCDE: \textit{w/o-hc}, which estimates ITE without considering hidden confounders in our model structure; \textit{w/o-lip}, which estimates ITE without Lipschitz constraint; \textit{w/o-high} and \textit{w/o-low}, which reduce the high-frequency components  and low-frequency component in LipCDE, respectively. As shown in Table~\ref{tab:mimic_app}, the estimation error of \textit{w/o-hc} is larger than that with hidden confounders, demonstrating that our proposed method can take advantage of the hidden information to estimate the treatment effects better. Besides, the gap between \textit{w/o-lip} and LipCDE shows that the Lipschitz regularization effectively avoids the negative impact of the presumed anomaly hidden confounders on the outcomes. Furthermore, we examine that both high-frequency information and low-frequency information can contribute to reducing the variance of estimating treatment effects, which is aligned with the separate statement on dynamic noisy proxies paper~\cite{kuzmanovic2021deconfounding} and time static confounders paper~\cite{hatt2021sequential}. In addition, we report the average RMSE over 10 model runs on each experiment and report the mean and standard deviation of LipCDE.  

\subsection{Analysis on  hidden confounders}
For \textbf{analysis on bounded hidden confounders}  part, we further use "covariance similarity score (CovSim)"\cite{wu2020understanding, fan2022depts} to show the similarity between true confounders and hideen confounders inferred by proposed methods. 
Given true confounders $i$ and  hideen confounders $j$, the “covariance similarity score" between them can be calculated by:
\begin{align}
\text{CovSim}_{i,j} = \frac{||(U_{i,r_i}D_{i,r_i}^{1/2})^\top U_{j,r_j} D_{j,r_j}^{1/2}||_F}{||U_{i,r_j}D_{i,r_i}^{1/2}||_F \cdot||U_{j,r_j}D_{i,r_j}^{1/2}||_F}, 
\end{align}
where $U_{i,r_i}$ and $D_{i,r_i}$ denote matrices consisting of the top $r_i$ eigenvalues and eigenvectors respectively, calculated by SVD over the instance-level representation matrix for each task $i$; 
$r_i$ is chosen to contain 99\% of the eigenvalues. As a result, the mean CovSim on different confounding degrees from 0.0 to 0.8 between true confounders and hidden confounders inferred by LipCDE is 0.82, which is \textbf{15\%}  and \textbf{30\% }higher than  the CovSim between true confounders and hidden confounders inferred by TSD-RMSN (0.71) and by SeqConf (0.63), respectively.  This shows that the hidden confounders extracted by LipCDE matches the real information to a greater extent than other baselines, proving that our proposed model can effectively constrain the hidden confounders.

For \textbf{analysis on outcomes of counterfactual path}  part, we plot all the degrees' results on Figure\ref{fig:box_app_0.0}, \ref{fig:box_app_0.2}, \ref{fig:box_app_0.4}, \ref{fig:box_app_0.6}, \ref{fig:box_app_0.8}. We plot the difference between the estimated outcome and the synthetic outcome on the counterfactual path. In addition to using the RMSE to illustrate our model's performance directly, when estimating the treatment outcome, LipCDE is able to make more accurate estimates above different time steps.

\begin{figure*}[h]
\centering
\includegraphics[width=\linewidth]{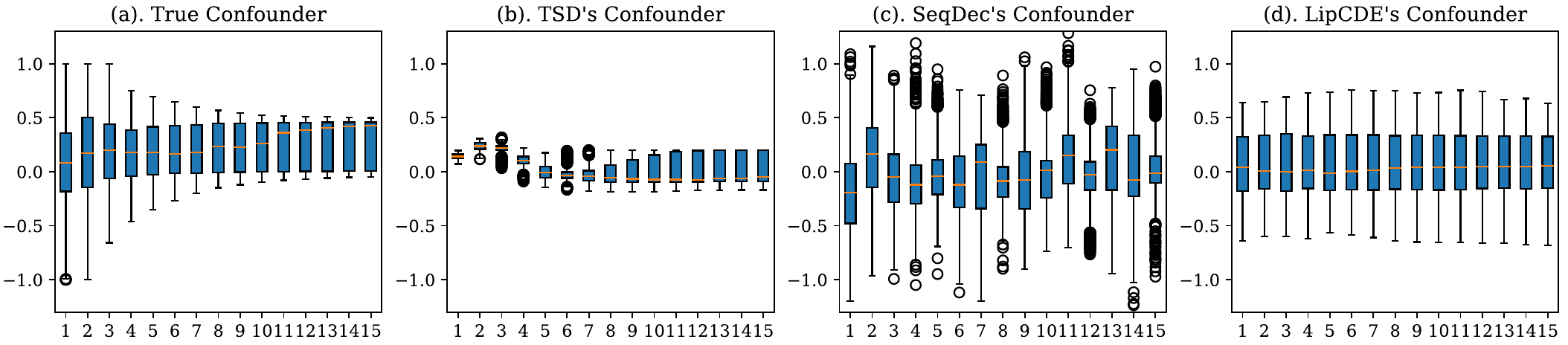}
\caption{Analysis of the hidden confounders' boundary  on synthetic data with first 15 timestamps. Here, the closer the shape of the box plot is to the true confounder, the less discrete the value is, the more accurate we consider the hidden confounder.}
\label{fig:box}
\end{figure*}

\begin{figure*}[htbp]
\centering
\includegraphics[width=\linewidth]{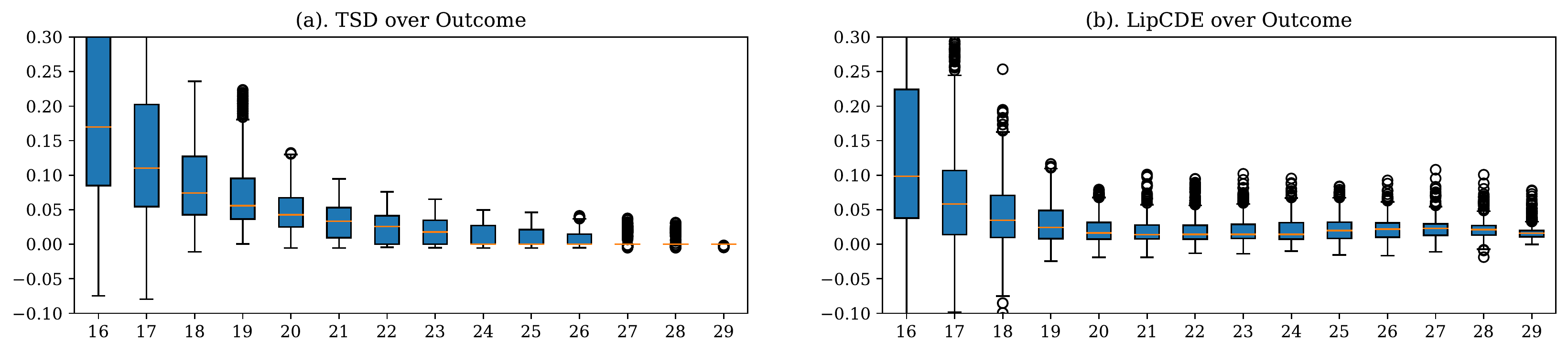}
\caption{Analysis of the outcome  on synthetic data's counterfactual path with degree 0.0.}
\label{fig:box_app_0.0}
\end{figure*}

\begin{figure*}[htbp]
\centering
\includegraphics[width=\linewidth]{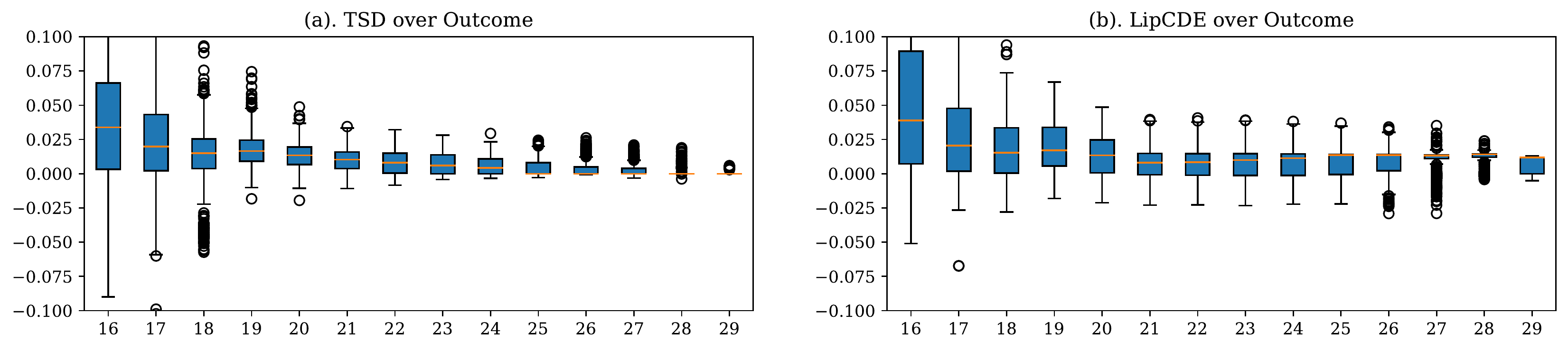}
\caption{Analysis of the outcome  on synthetic data's counterfactual path with degree 0.2.}
\label{fig:box_app_0.2}
\end{figure*}

\begin{figure*}[htbp]
\centering
\includegraphics[width=\linewidth]{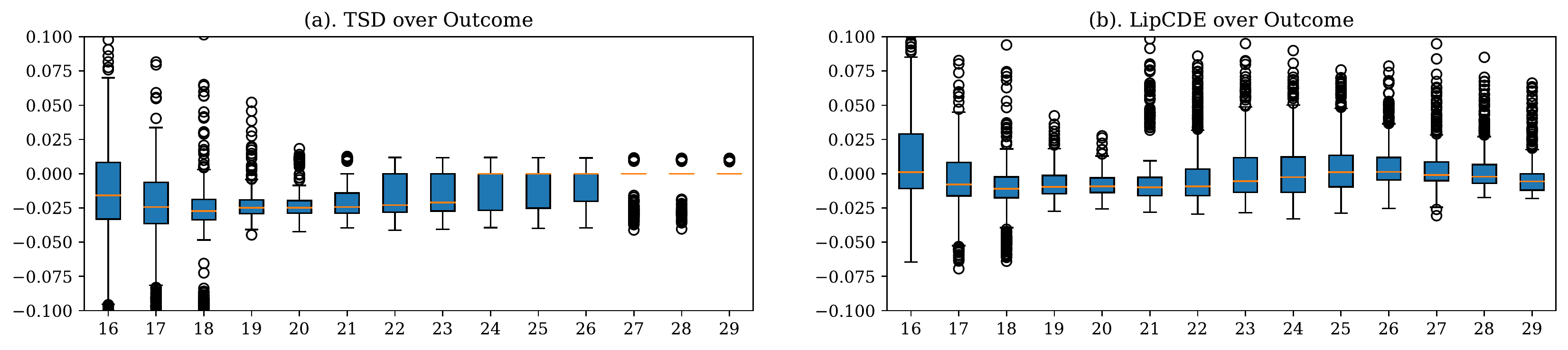}
\caption{Analysis of the outcome  on synthetic data's counterfactual path with degree 0.4.}
\label{fig:box_app_0.4}
\end{figure*}

\begin{figure*}[htbp]
\centering
\includegraphics[width=\linewidth]{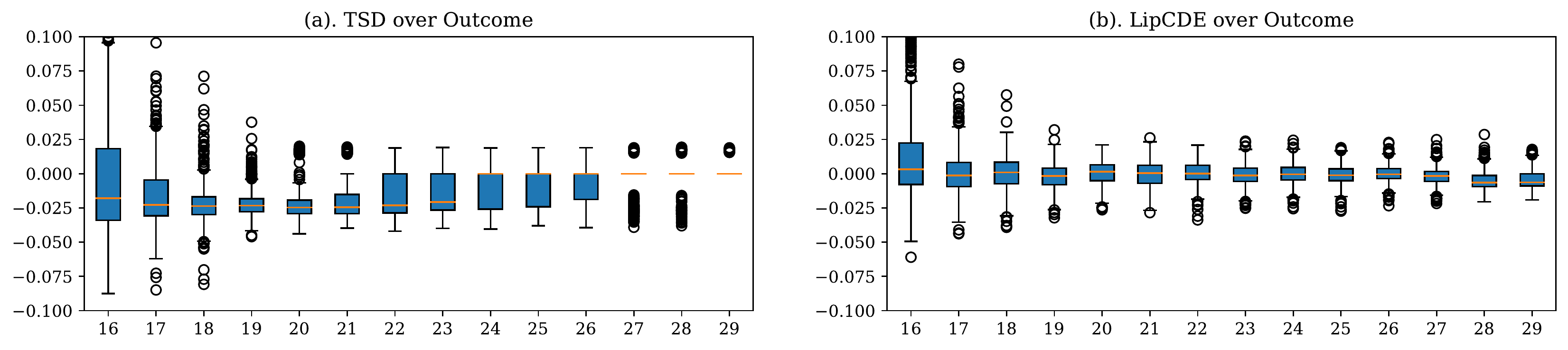}
\caption{Analysis of the outcome  on synthetic data's counterfactual path with degree 0.6.}
\label{fig:box_app_0.6}
\end{figure*}

\begin{figure*}[htbp]
\centering
\includegraphics[width=\linewidth]{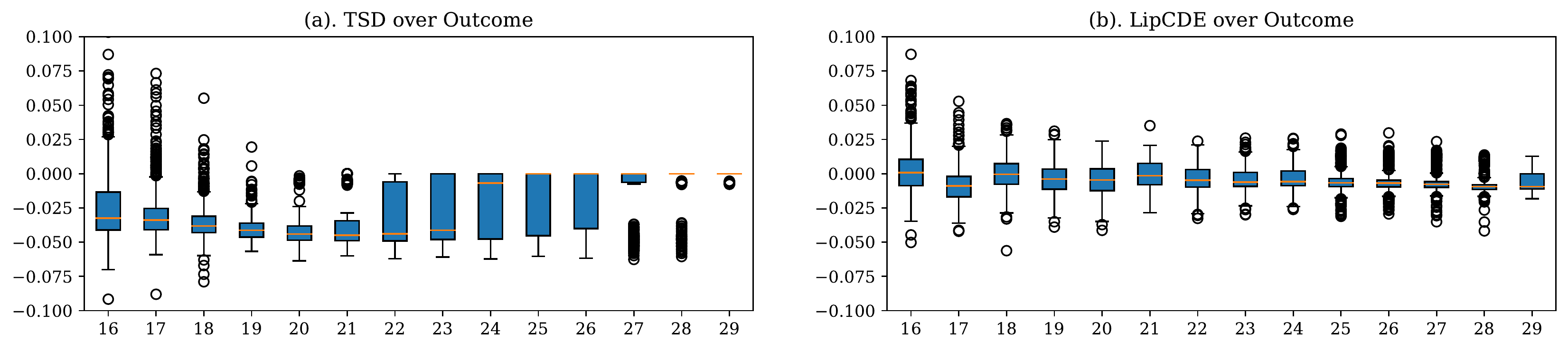}
\caption{Analysis of the outcome  on synthetic data's counterfactual path with degree 0.8.}
\label{fig:box_app_0.8}
\end{figure*}

\section{Theoretical Discussion}
\label{app_deconfounding_theory}

\subsection{The Deconfounding Assumption}

\subsubsection{Issues with Deconfounding}
Despite one of the fundamental assumptions of the Time Series Deconfounder \cite{bica2020time} being the Deconfounder Assumption from \cite{wang2019blessings}, there has been increasing concern with the validity of causal inference under the deconfounder assumption, including \cite{imai2019deconfounderComments,damour2019deconfounderComments,ogburn2019deconfounderComments,wangBlei2020clarifyingTheDeconfounder,ogburn2020counterexamplesToDeconfounder}.
In light of these concerns, which indicate that the deconfounder assumption alone is insufficient to do full causal inference in the presence of hidden confounders, there are a number of pathways suggested for still utilizing the deconfounder assumption effectively: assuming uniqueness and exact reconstruction of confounders from treatments alone \cite{imai2019deconfounderComments}; further parametric identification assumptions \cite{damour2019deconfounderComments}; and identifiable or sufficiently rich proxy variables \cite{miao2018proxyVariablesUnmeasuredConfounders,tchetgen2020introToProximalCausal,kuroki2014measurement}.
Consequently, the view that applying the Deconfounder Assumption to confounded longitudinal studies directly seemingly faces the majority of these existing issues.
Following up work in the domain of deconfounding observational data in the longitudinal domain has considered different sets of assumptions to recover unbiased identification of effects.
\cite{hatt2021sequential} considers the assumption that there is a confounder that is static over time.
Without requiring the single sequential strong ignorability assumption, they then show strong ignorability under the deconfounding assumption.
Although the deconfounding assumption seems better warranted in the case of static confounders, it is possible that necessitating the static confounders assumption will exclude many potential cases of interest for causal inference on longitudinal data.
\cite{kuzmanovic2021deconfounding} instead considers that the observed covariates might not directly measure unobserved confounders but instead can be considered as noisy measurements of the true confounders.
Unfortunately, no theoretical results are provided and although an appealing notion in the presence of abundant data, often greater care is required in the domain of causal inference.
Both negative controls and proximal causal learning \cite{miao2018proxyVariablesUnmeasuredConfounders,tchetgen2020introToProximalCausal} further delineate between proxy covariates which influence the outcome and proxy covariates which influence the treatment.
In the absence of such distinctions, causal inference may fail to achieve unbiased estimates.
Consequently, it is likely that nonparametric treatment of time-dynamic data will require leveraging emerging work such as negative controls and proximal causal learning.

\subsubsection{Discussion on LipCDE} In this work, we attempt to make practical assumptions toward achieving unbiased estimates for causal inference in the longitudinal setting.
As a consequence, we further existing work on time series deconfounding with an extended assumption combining both the regimes of static confounders and noisy proxy variables.
In this way, we subdivide hidden confounders into two independent regimes: the low-frequency components and the high-frequency components.
After this division, we can treat each uniquely.
The low-frequency confounders represent the static confounders under which reconstruction of the confounding variable using the deconfounder assumption seems plausible after sufficient observation of treatments.
With sufficiently bounded frequency alongside the Lipschitz influence assumption, we can achieve a bounded range of plausible confounders.

Although unlikely to completely debias an estimation procedure, it is possible to significantly reduce the bias and variance in a way dependent on the nuisance parameter of the low-frequency bound $\omega_\ell$.
Such a balance should be considered in the context of the high-frequency confounders, which are then assumed to have sufficient proxy variables available in the observed data.
Afterward, additional care should be taken for these proxy variables in terms of sufficient assumptions for identifiability.
Possible routes include making assumptions directly on the noise distribution or identifying confounders represented by each covariate.
Both of these paths often require greater care, inspection, or prior knowledge of the data which is being studied than is warranted by many practitioners of machine learning. 
Consequently, it is likely that methods developing the work of proximal causal learning in extension to longitudinal data will be required to suit such needs flexibly.
We delay such a study detailing the application of proximal causal learning to dynamic time for future work.

\subsection{Extension of Theorem~\ref{the1} to  continuous time setting}
\label{app_prove}

 Following the same techniques as used in the appendices of ~\cite{bica2020time,wang2019blessings}, we introduce several definitions and lemmas that will help us relocate Theorem~\ref{the1} for discrete-time setting to adapt it to continuous time setting.  As a reminder,  at each timestep $t_k$ under continues-time setting, the random variable $z_{t_k} \in Z_{t_k}$ is constructed as a function of the history path until timestep $t_k: z_{t_k}= g\left({H}_{t_{(k-1)}}\right)$, where ${H}_{t_{(k-1)}}=\left({\mathbf{Z}}_{t_{(k-1)}}, {\mathbf{X}}_{t_{(k-1)}}, {\mathbf{A}}_{t_{(k-1)}}\right)$ takes values in ${\mathcal{H}}_{t_{(k-1)}}={\mathcal{Z}}_{t_{(k-1)}} \times {\mathcal{X}}_{t_{(k-1)}}\times {\mathcal{A}}_{t_{(k-1)}}$ and $g: {\mathcal{H}}_{t_{(k-1)}} \rightarrow \mathcal{Z}$. In order to obtain unbiased estimation using hidden confounders $Z_{t_k}$, the following property needs to hold:
\begin{align}
 Y(a_{\geq t_k}) \perp \!\!\! \perp  (A_{t_{k_1}}, \cdots, A_{t_{k_j}})| X_{t_k}, A_{t_{(k-1)}}, Z_{t_k},
\end{align}

$\forall {a}_{\geq t_k}$ and for all  $t_k$ in irregular samples.

\begin{definition}
\textbf{Continuous  sequential Kallenberg construction}

At timestep $t_k$, we say that the distribution of assigned treatment $\left(A_{t_{k}1}, \ldots A_{t_{k} j}\right)$ admits a continuous sequential Kallenberg construction from the random variables $Z_{t_k}=g\left({H}_{t_{(k-1)}}\right)$ and $X_{t_k}$ if there exist measurable functions $f_{t_{k}j }: \mathcal{Z}_{t_k} \times \mathcal{X}_{t_k} \times[0,1] \rightarrow \mathcal{A}_{t_{k} j}$ and random variables $U_{j t_k} \in[0,1]$ for $j$ assigned treatments, such that:
\begin{align}
\label{eq13.}
    A_{{t_k} j}=f_{{t_k} j}\left(Z_{t_k}, X_{t_k}, U_{{t_k} j}\right),
\end{align}
where $U_{{t_k} j}$ marginally follow Uniform $[0,1]$ and jointly satisfy:
\begin{align}
    U_{{t_k} j} \perp \!\!\! \perp  Y(a_{\geq t_k}) | X_{t_k}, H_{t_{(k-1)}}, Z_{t_k}
\end{align}
for all ${a}_{\geq t_k}$ with $j$ assigned treatments, where $f_{{t_k} j}$ are measurable.
\end{definition}

\begin{lemma}
\label{lemma2}
(Continuous  sequential Kallenberg construction $t \Rightarrow$ Sequential strong ignorability.) If at every timestep $t_k$, the distribution of assigned treatments $A_{{t_k} j}$ admits a continuous  sequential Kallenberg construction from $Z_{t_k}$ and $X_{t_k}$ then we obtain sequential strong ignorability.
\end{lemma}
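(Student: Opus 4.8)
The plan is to fix an arbitrary timestep $t_k$ in the irregular sample together with an arbitrary treatment plan $a_{\geq t_k}$, and to establish the pointwise conditional independence $Y(a_{\geq t_k}) \perp \!\!\! \perp (A_{t_k 1}, \ldots, A_{t_k j}) \mid X_{t_k}, A_{t_{(k-1)}}, Z_{t_k}$ directly from the continuous sequential Kallenberg construction; since the argument will hold for every $t_k$ and every $a_{\geq t_k}$, sequential strong ignorability follows by definition. The core idea mirrors the discrete-time deconfounding arguments of \cite{bica2020time,wang2019blessings}: the whole point of the construction is that each treatment is a deterministic measurable image of the conditioning variables and an exogenous noise variable, so the treatment inherits whatever conditional independence the noise enjoys with respect to the potential outcome.

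First I would assemble the stacked noise vector $U_{t_k} = (U_{t_k 1}, \ldots, U_{t_k j})$ and record the joint independence supplied by the construction, namely $U_{t_k} \perp \!\!\! \perp Y(a_{\geq t_k}) \mid X_{t_k}, H_{t_{(k-1)}}, Z_{t_k}$. Next, working on the conditioning event where $X_{t_k}$, $Z_{t_k}$, and the history $H_{t_{(k-1)}}$ are held fixed, I would use the representation~(\ref{eq13.}), $A_{t_k j} = f_{t_k j}(Z_{t_k}, X_{t_k}, U_{t_k j})$, to observe that the treatment vector becomes a measurable function of $U_{t_k}$ once we condition on $(X_{t_k}, Z_{t_k})$; equivalently, $A_{t_k}$ is measurable with respect to the $\sigma$-algebra generated by $U_{t_k}$ together with the conditioning variables. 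Then I would invoke the standard measure-theoretic fact that conditional independence is preserved under measurable maps of the independent factor, i.e.\ if $U \perp \!\!\! \perp Y \mid W$ then $\psi(U, W) \perp \!\!\! \perp Y \mid W$ for any measurable $\psi$; applying this with $\psi$ the stacked maps $f_{t_k j}$ yields $A_{t_k} \perp \!\!\! \perp Y(a_{\geq t_k}) \mid X_{t_k}, H_{t_{(k-1)}}, Z_{t_k}$.

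The remaining step, and the one I expect to be the \emph{main obstacle}, is the reduction of the conditioning set from the full history path $H_{t_{(k-1)}}$ --- against which the noise independence is stated --- down to the coarser collection $(X_{t_k}, A_{t_{(k-1)}}, Z_{t_k})$ appearing in the ignorability target. Because $Z_{t_k} = g(H_{t_{(k-1)}})$ is a measurable function of the path and $H_{t_{(k-1)}}$ contains $A_{t_{(k-1)}}$, I would carry out this reduction using the graphoid axioms (decomposition, weak union, and contraction) together with the inclusions between the relevant $\sigma$-algebras, checking that the measurability of $g$ and of the $f_{t_k j}$ makes these inclusions hold in the continuous, irregularly sampled setting. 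The delicate point is purely the measure-theoretic bookkeeping of conditioning on a continuous-time path rather than a finite discrete history: one must confirm that the path-valued conditioning variables still generate well-behaved $\sigma$-algebras so that the independence transfers cleanly. Finally, since $t_k$ and $a_{\geq t_k}$ were arbitrary, the same chain of reasoning applied at every irregular timestamp delivers sequential strong ignorability, completing the extension of Theorem~\ref{the1} to the continuous time setting.
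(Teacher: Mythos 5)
Your core argument is exactly the paper's own proof of Lemma~\ref{lemma2}: start from the noise independence $U_{t_k j} \perp\!\!\!\perp Y(a_{\geq t_k}) \mid X_{t_k}, H_{t_{(k-1)}}, Z_{t_k}$ supplied by the construction, upgrade it (trivially, since $Z_{t_k}$ and $X_{t_k}$ sit in the conditioning set) to $\left(Z_{t_k}, X_{t_k}, U_{t_k j}\right) \perp\!\!\!\perp Y(a_{\geq t_k}) \mid X_{t_k}, H_{t_{(k-1)}}, Z_{t_k}$, and then use the measurability of $f_{t_k j}$ in Eq.~(\ref{eq13.}) to transfer the conditional independence from the noise to $A_{t_k j}$. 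Up to the stacking of the noise coordinates, this is the same three-step route the paper takes, and that part of your proposal is sound.

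Where you diverge is the step you single out as the ``main obstacle'': reducing the conditioning set from $\left(X_{t_k}, H_{t_{(k-1)}}, Z_{t_k}\right)$ to $\left(X_{t_k}, A_{t_{(k-1)}}, Z_{t_k}\right)$. Two things to note. First, the paper never performs this reduction: it passes directly to the final display, implicitly reading the conditioning variables as full history paths (as in the $\bar{A}_{t-1}$ notation of \cite{bica2020time}), under which reading, combined with $Z_{t_k} = g\left(H_{t_{(k-1)}}\right)$ being measurable with respect to the history, the two conditioning $\sigma$-algebras coincide and there is nothing to prove --- so your obstacle is an artifact of taking the paper's notation to mean single time-slices. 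Second, and more importantly, if the reduction were genuinely needed (i.e., if past covariates $X_{t_{(k-1)}}$ and past confounders $Z_{t_{(k-1)}}$ really were dropped from the conditioning set), your proposed graphoid-axiom route would not go through as stated: decomposition and weak union only let you shrink the independent block or enlarge the conditioning set, and conditional independence is in general \emph{not} preserved under coarsening of the conditioning $\sigma$-algebra. To remove variables from the conditioning set via contraction you would need an auxiliary independence such as $Y(a_{\geq t_k}) \perp\!\!\!\perp \left(X_{t_{(k-1)}}, Z_{t_{(k-1)}}\right) \mid X_{t_k}, A_{t_{(k-1)}}, Z_{t_k}$, which follows neither from the Kallenberg construction nor from measurability of $g$, and your sketch does not supply it. So either adopt the history-path reading (in which case your last paragraph can be deleted), or recognize that the reduction requires a substantive extra assumption rather than measure-theoretic bookkeeping.
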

\begin{proof}
Without loss of generality, we assume that $\mathcal{A}_{j}$ are Borel spaces. For any irregular timestamps $t_k$ we assume $\mathcal{Z}_{t_k}$ and $\mathcal{X}_{t_k}$ are measurable spaces. As $A_{{t_k} j}$ admits  continuous  sequential Kallenberg construction, we have 
\begin{align}
    U_{{t_k} j} \perp \!\!\! \perp  Y(a_{\geq t_k}) | X_{t_k}, H_{t_{(k-1)}}, Z_{t_k}
\end{align}
for all ${a}_{\geq t_k}$ with $j$ assigned treatments. This implies that:
\begin{align}
\left(Z_{t_k}, X_{t_k}, U_{{t_k} j} \right) \perp \!\!\! \perp Y(a_{\geq t_k}) | X_{t_k}, H_{t_{(k-1)}}, Z_{t_k}
\end{align}
Since the $A_{{t_k} j}$ are measurable functions according to Eq.~\ref{eq13.} and $H_{t_{(k-1)}}=\left(X_{t_{(k-1)}}, A_{t_{(k-1)}}, Z_{t_{(k-1)}}
\right)$, we have that sequential strong ignorability holds:
\begin{align}
A_{{t_k} j} \perp \!\!\! \perp  Y(a_{\geq t_k}) | X_{t_k}, A_{t_{(k-1)}}, Z_{t_k}
\end{align}
for all ${a}_{\geq t_k}$ with $j$ assigned treatments of irregular samples.
\end{proof}

\begin{lemma}
\label{lemma3}
 (Factor models for the assigned treatments $\Rightarrow$ Sequential continuous  sequential Kallenberg construction.) Under weak regularity conditions\footnote{Regularity condition: The domains of the causes $\mathcal{A}_{j}$ are Borel subsets of compact intervals. Without loss of generality, we assume $\mathcal{A}_{j}=[0,1]$ for $j$th treatment.}, if the distribution of assigned causes $p\left({\mathbf{a}}_{T}\right)$ can be written as the factor model $p\left(\theta, {\mathbf{x}}_{T}, {\mathbf{z}}_{T}, {\mathbf{a}}_{T}\right)$ then we obtain a continuous sequential Kallenberg construction for irregular  timestamps.
\end{lemma}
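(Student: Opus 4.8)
The plan is to build the construction by applying the Kallenberg representation theorem (the noise-outsourcing lemma from Kallenberg's probability theory, as used in the discrete arguments of \cite{bica2020time,wang2019blessings}) separately at each of the finitely many irregular observation times $t_0 < \cdots < t_{m}$, and then to verify that the residual noise variables produced this way satisfy the conditional independence demanded by the Definition. First I would fix a timestamp $t_k$ and condition on $(Z_{t_k}, X_{t_k})$. Under the stated regularity conditions each treatment domain $\mathcal{A}_j$ is a Borel subset of a compact interval (taken to be $[0,1]$), so the conditional law of $A_{t_k j}$ given $(Z_{t_k}, X_{t_k})$ admits a measurable representation $A_{t_k j} = f_{t_k j}(Z_{t_k}, X_{t_k}, U_{t_k j})$ with $U_{t_k j} \sim \mathrm{Uniform}[0,1]$ drawn independently of $(Z_{t_k}, X_{t_k})$. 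This matches exactly the form of Eq.~\ref{eq13.}, so the existence of the functions $f_{t_k j}$ and the marginal uniformity of the $U_{t_k j}$ follow directly from the Borel-space assumption, without invoking the factor structure.

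The factor model hypothesis is what supplies the joint independence and drives a sequential induction over the ordered irregular timestamps. Writing $p(a_T)$ as $p(\theta, x_T, z_T, a_T)$ means that, conditional on the latent confounders together with the covariates, history, and parameters, the assigned treatments at $t_k$ carry no further dependence on the potential outcomes beyond what is already encoded in $(Z_{t_k}, X_{t_k}, H_{t_{(k-1)}})$. Assuming the construction holds for all $t_{k'} < t_k$, the history $H_{t_{(k-1)}} = (Z_{t_{(k-1)}}, X_{t_{(k-1)}}, A_{t_{(k-1)}})$ is measurable with respect to the previously constructed confounders, covariates, and uniforms, so conditioning on it is legitimate. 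Because $Z_{t_k} = g(H_{t_{(k-1)}})$ is a deterministic function of the history and the latent variables render the multiple treatments conditionally independent, the freshly drawn $U_{t_k j}$ can be taken to be pure exogenous treatment noise at step $t_k$.

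The main obstacle is verifying the conditional independence $U_{t_k j} \perp \!\!\! \perp Y(a_{\geq t_k}) \mid X_{t_k}, H_{t_{(k-1)}}, Z_{t_k}$, which is precisely the property consumed by Lemma~\ref{lemma2} to deliver sequential strong ignorability. The delicate point is that the Kallenberg uniforms are automatically independent only of the variables used in their own construction, so to obtain independence from the potential outcome I must invoke that $Z_{t_k}$ captures all common causes of the multi-cause treatments and the outcome; this is exactly the content of the factor model, and it forces the only randomness remaining in $A_{t_k j}$ after conditioning to be the outcome-irrelevant $U_{t_k j}$. Finally, I would observe that passing from the discrete setting to irregular continuous time requires only that each state space $\mathcal{Z}_{t_k}, \mathcal{X}_{t_k}$ remain a Borel space at every observed (possibly irregularly spaced) timestamp, so that the representation theorem applies verbatim at each $t_k$; the sequential argument uses only the ordering of the finitely many observation times and never appeals to continuity of the index set.
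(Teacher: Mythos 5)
Your overall route is the same as the paper's: apply Kallenberg's randomization lemma (Lemma 2.22 in \cite{kallenberg1997foundations}) separately at each of the finitely many irregular timestamps to obtain the measurable representation $A_{t_k j} = f_{t_k j}\left(Z_{t_k}, X_{t_k}, U_{t_k j}\right)$ with $U_{t_k j} \sim \mathrm{Uniform}[0,1]$, note that this representation step needs only the Borel-space regularity condition and not the factor structure, and then invoke the factor model to supply the joint conditional-independence property that Lemma~\ref{lemma2} consumes. Your closing observation --- that the sequential argument uses only the ordering of the observed timestamps and never the continuity of the index set, so the discrete-time machinery transfers verbatim --- is also exactly how the paper handles the extension to the continuous-time setting.

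However, at the step you yourself flag as the main obstacle, namely verifying $U_{t_k j} \perp \!\!\! \perp Y(a_{\geq t_k}) \mid X_{t_k}, H_{t_{(k-1)}}, Z_{t_k}$, your argument is an assertion rather than a proof: saying that the factor model ``forces the only randomness remaining in $A_{t_k j}$ after conditioning to be the outcome-irrelevant $U_{t_k j}$'' presupposes precisely the outcome-irrelevance of $U_{t_k j}$ that needs to be established. The paper closes this gap with an explicit exhaustion of cases on a hypothetical confounder $V_{t_k}$ (not equal to $Z_{t_k}$ or $X_{t_k}$ almost surely) of $U_{t_k}$ and $Y(a_{\geq t_k})$: (i) if $V_{t_k}$ were time-invariant, it would simultaneously confound $U_s$ for $s \neq t_k$ and thereby induce dependence among the uniforms at different timestamps, contradicting the fact that they are drawn i.i.d.\ from $\mathrm{Uniform}[0,1]$; (ii) if $V_{t_k}$ were time-varying, it would confound the assigned treatment $A_{t_k}$ through $U_{t_k}$, hence be a multi-cause confounder of the treatments and so already be absorbed into the substitute confounder produced by the factor-model construction, contradicting its assumed distinctness from $Z_{t_k}$. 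Your proposal contains the raw materials for both halves --- you note that the uniforms are freshly drawn exogenous noise at each step and that $Z_{t_k}$ captures all common causes of the multi-cause treatments and the outcome --- but without actually running this case distinction the crux of the lemma is left unverified, and the remainder of your argument (the induction over timestamps and the measurability of $H_{t_{(k-1)}}$ with respect to previously constructed variables) cannot substitute for it.
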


The proof for Lemma~\ref{lemma3} uses Lemma $2.22$ in~\cite{kallenberg1997foundations} (kernels and randomization): Let $\mu$ be a probability kernel from a measurable space $S$ to a Borel space $T$. Then there exists some measurable function $f: S \times[0,1] \rightarrow T$ such that if $\vartheta$ is $U(0,1)$, then $f(s, \vartheta)$ has distribution $\mu(s,)$, for every $s \in S$.

\begin{proof}
For timestep $t_k$, consider the random variables $A_{t_{k}j} \in \mathcal{A}_{t_k{j}}, X_{t_k} \in \mathcal{X}_{t_k}, Z_{t_k}=g\left({H}_{t_{(k-1)}}\right) \in \mathcal{Z}_{t_k}$ and $\theta_{j} \in \Theta$. We assume sequential single strong ignorability holds. Without loss of generality, we assume $\mathcal{A}_{t_k j}=[0,1]$ for $j$th treatment.

From Lemma $2.22$ in~\cite{kallenberg1997foundations} , there exists some measurable function $f_{t_k j}: \mathcal{Z}_{t_k} \times \mathcal{X}_{t_k} \times[0,1] \rightarrow[0,1]$ such that $U_{t_k j} \sim$ Uniform $[0,1]$ and:
\begin{align}
A_{t_k j}=f_{t_k j}\left(Z_{k_t}, X_{k_t}, U_{t_k j}\right)
\end{align}
with  $U_{t_k} \perp \!\!\! \perp A_{t_1}$ for all irregular samples. It remains to show that
\begin{align}
    U_{{t_k} j} \perp \!\!\! \perp  Y(a_{\geq t_k}) | X_{t_k}, H_{t_{(k-1)}}, Z_{t_k}
\end{align}
This can be seen by a distinction of cases. For any $j$ treatment: if there exists a random variable $V_{t_k}$ (not equal to $Z_{t_k}$ or $X_{t_k}$ almost surely) that confounds $U_{t_k}$ and $Y\left(a_{\geq t_k}\right)$, it is either (i) time-invariant or (ii) time-varying. (i) If $U_{t_k}$ is time-invariant, then it would also confound $U_{s}$ for $s \neq t_k$, which introduces depedences between the random variables $U_{t_k}$ for all irregular samples. However, since $U_{t_k}$ are drawn $i i d$ from Uniform $[0,1]$, this cannot be the case. Otherwise, $U_{t_k}$ and $U_{s}$ for $s \neq t_k$ would not be jointly independent. (ii) If $V_{t_k}$ is time-varying, then $V_{t_k}$ would confound $A_{t_k}$ through $U_{t_k}$. As a consequence, $V_{t_k}$ would be also a confounders for $A_{t_k}$, which means $V_{t_k} \subseteq U_{t_k}$. As a result, there cannot be another random variable that confounds $U_{t_k}$, and therefore $U_{{t_k} j} \perp \!\!\! \perp  Y(a_{\geq t_k}) | X_{t_k}, H_{t_{(k-1)}}, Z_{t_k}$ holds true.

\end{proof}

\section{Related Work}
\label{REW}
Another different line of research involves the use of synthetic control measures for counterfactual estimation \cite{abadie2010synthetic, chernozhukov2021exact}. \cite{doudchenko2016balancing} use negative weights and intercept terms to estimate weights depending on the data structure. \cite{ding2020dynamical} propose time-varying weights to model the changing correlation between variables, and \cite{athey2021matrix} interpret counterfactual estimation as a matrix completion problem using matrix normalization. However, although the goodness of fit can be improved, matching in discrete time is still difficult with irregularly aligned data, i.e., unit observations that are not aligned in time. To this end, differential equations have been introduced into causal and counterfactual inference. \cite{rubenstein2016deterministic} propose that the equilibrium state of a first-order ODE system \cite{chen2018neural} can be described by a deterministic structural causal model, even with non-constant interventions.  ODE-RNN \cite{rubanova2019latent} moderates the trajectory of interest using irregularly-sampled data. Additionally, \cite{Bellot2021PolicyAU} estimates counterfactual path explicitly using the formalism of controlled differential equations (CDE).  The synthetic control literature is usually discussed in contrast to the structural time series model.  
Besides, some structural methods  rely on the regularity of the treated group trajectories over time to infer counterfactual estimates to  balance the distribution between treated and control groups~\cite{fan2023dish}, whereas synthetic controls rely on the regularity of each group to infer counterfactual estimates  to match treated units to control units \cite{Bellot2021PolicyAU, morrill2021neural}. 
Note that our work is different from another research line of identifying causal structure, i.e., causal discovery, where proposed approaches include: \cite{spirtes2000causation, tian2013causal, peters2014causal, huang2019causal}, etc.

\section{Broader Impact}

Causal analysis is one of the most fundamental problems in time series analysis and easily finds many applications in finance, retail, healthcare, transportation, manufacturing, etc. Among them, estimating the treatment effects for time series data is one key task in many industries and research scenarios which are extremely challenging due to the existence of hidden confounders and dynamic causal relationships of irregular samples or missing observations. 

In this paper, we leverage recent advances in Lipschitz regularization and neural controlled differential equation (CDE) to tackle the above challenges, leading to effective and scalable solutions to causal analysis for time series applications in the wild. LipCDE can directly model the dynamic causal relationships between historical data and outcomes with irregular samples by considering the boundary of hidden confounders given by Lipschitz constraint neural networks.

Although current models including LipCDE are still far away from using causality to figure out the counterfactual world  and to estimate treatment effect absolutely correctly, we do believe that the margin is decreasing rapidly. We would like to highlight that researchers are able to understand and mitigate the potential risks in estimating treatment effects. Especially in the healthcare domain, illogical or unreasonable treatments  would be a disaster for an individual or humanity. In addition, we suggest researchers take a people-centered approach to build the responsible AI system  with the features of fairness, interpretability, privacy, security, and accountability.

\end{document}